\documentclass{article}
\usepackage{iclr2026_conference,times}
\iclrfinalcopy

\usepackage{hyperref}
\usepackage{url}

\author{Yongchang Hao$^\diamondsuit$ \quad Lili Mou$^{\diamondsuit\spadesuit}$ \\
$^\diamondsuit$Dept. Computing Science \& Alberta Machine Intelligence Institute (Amii), University of Alberta \\
$^\spadesuit$Canada CIFAR AI Chair \\
\texttt{yongcha1@ualberta.ca} \quad \texttt{doublepower.mou@gmail.com}
}

\usepackage{algorithm}
\usepackage{algorithmic}
\usepackage{amsthm}
\usepackage{amsmath}
\usepackage{amsfonts}
\usepackage{amssymb}
\usepackage{physics}
\usepackage{graphicx}
\usepackage{xspace}
\usepackage{multirow}
\usepackage{subcaption}
\usepackage{thmtools}
\usepackage{thm-restate}
\usepackage{booktabs}

\DeclareRobustCommand{\logo}{%
  \begingroup\normalfont
  \raisebox{-0.25em}{%
  \hspace{-0.5em}
  \includegraphics[height=1.5em]{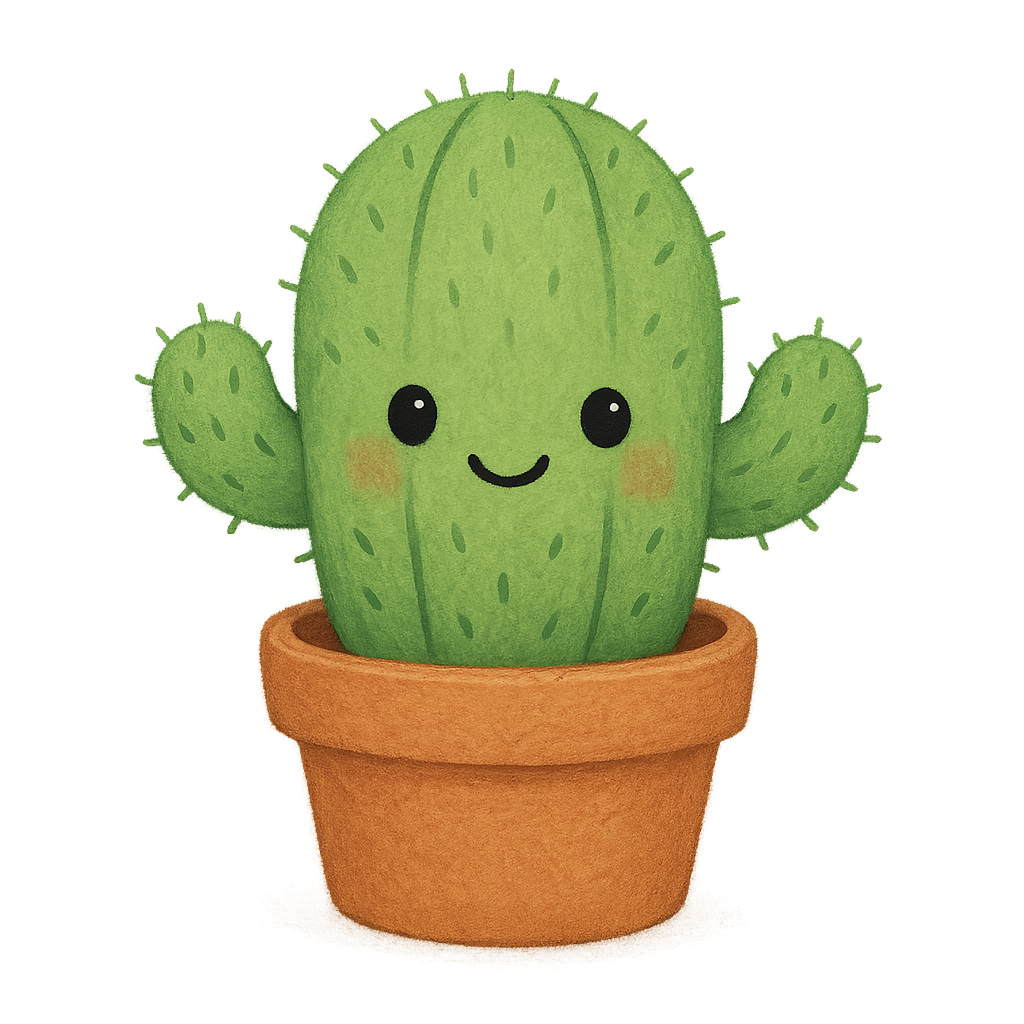}%
  }%
  \kern 0.1em
  \endgroup
}

\newcommand{\name}{Cactus\xspace}
\title{\logo\name: Accelerating Auto-Regressive Decoding with Constrained Acceptance Speculative Sampling}

\usepackage{amsmath,amsfonts,amsthm,bm,physics}

\def\1{\bm{1}}

\def\ve{{\bm{e}}}

\def\vg{{\bm{g}}}
\def\vh{{\bm{h}}}

\def\vr{{\bm{r}}}

\def\vx{{\bm{x}}}

\DeclareMathAlphabet{\mathsfit}{\encodingdefault}{\sfdefault}{m}{sl}
\SetMathAlphabet{\mathsfit}{bold}{\encodingdefault}{\sfdefault}{bx}{n}

\def\sN{{\mathbb{N}}}

\def\sR{{\mathbb{R}}}

\newcommand{\E}{\mathop{\mathbb{E}}}

\DeclareMathOperator*{\argmax}{arg\,max}

\theoremstyle{plain}
\newtheorem{theorem}{Theorem}

\newtheorem{lemma}[theorem]{Lemma}

\theoremstyle{definition}

\theoremstyle{remark}

\begin{document}
\maketitle

\begin{abstract}

Speculative sampling (SpS) has been successful in accelerating the decoding throughput of auto-regressive large language models by leveraging smaller draft models. SpS strictly enforces the generated distribution to match that of the verifier LLM. This is unnecessarily restrictive as slight variations of the verifier's distribution, such as sampling with top-$k$ or temperature, would also be acceptable. Typical acceptance sampling (TAS) alleviates this issue by accepting more tokens using entropy-based heuristics. However, this approach distorts the verifier distribution, potentially degrading output quality when the verifier encodes critical information.
In this work, we formalize the speculative sampling algorithm through the lens of constrained optimization. Based on this formulation, we propose \textbf{Cactus} (\textbf{c}onstrained \textbf{ac}cep\textbf{t}ance spec\textbf{u}lative \textbf{s}ampling), a method that guarantees controlled divergence from the verifier distribution and increasing acceptance rates. Empirical results across a wide range of benchmarks confirm the effectiveness of our approach. The code is publicly available.\footnote{\url{https://github.com/MANGA-UOFA/Cactus}}

\end{abstract}

\section{Introduction}\label{sec:intro}

Auto-regressive large language models (LLMs) have driven remarkable advances in machine learning and artificial intelligence~\citep{vaswani2017attention,brown2020language,kaplan2020scaling}, yet their growing size comes with steep computational costs: generating each token requires a memory-bound forward pass through hundreds of billions of parameters, which bottlenecks LLM throughput~\citep{yuan2024llm}. Speculative sampling (SpS) addresses this by first using a smaller draft model to propose a certain number of candidate tokens autoregressively, then verifying the candidate tokens in parallel with the large-scale \emph{verifier} LLM~\citep{stern2018blockwise, xia2022speculative,leviathan2023fast,chen2023accelerating}. Since SpS can emit multiple tokens per large-model invocation, it substantially speeds up auto-regressive generation by alleviating the memory-bound issue.

Despite its success, SpS enforces strict distributional equivalence with the verifier, causing correct but lower-probability tokens to be rejected. In real-world applications, exact adherence to the original distribution is generally not required~\citep{holtzman2019curious,meister2020if}. Typical acceptance sampling (TAS;~\citeauthor{cai2024medusa},~\citeyear{cai2024medusa}) mitigates this issue by accepting proposals based on entropy-driven heuristics~\citep{hewitt2022truncation, meister2023locally}. However, we show in this paper that TAS improves acceptance rates at the cost of distorting the verifier's output distribution and risking semantic drift when the verifier encodes critical information.

In this work, we reformulate speculative sampling as a constrained optimization problem, explicitly trading off acceptance rate against divergence from the verifier's distribution. Guided by this theory, we introduce Cactus (\textbf{c}onstrained \textbf{ac}cep\textbf{t}ance spec\textbf{u}lative \textbf{s}ampling), a simple yet principled modification that enforces a hard constraint on distributional divergence while enabling higher acceptance rates.

We conducted experiments on a wide range of benchmarks with multiple state-of-the-art large language models. Results show that Cactus consistently improves generation throughput compared with the lossless SpS. In addition, Cactus preserves the generation quality and diversity of the verifier model, due to its explicit divergence constraint.

\section{Approach}\label{sec:approach}

We first provide a generalized formulation for speculative sampling. This enables a theoretical analysis of speculative sampling under a constrained optimization framework. Based on this analysis, we propose a new algorithm, \name, which provably approximates the verifier distribution $q$ while achieving higher acceptance rates.

\subsection{Generalization of speculative sampling}\label{sec:approach:generalization}

\paragraph{Speculative sampling.} The vanilla speculative sampling (SpS;~\citeauthor{chen2023accelerating},~\citeyear{chen2023accelerating}) uses a \emph{draft model} $p(\cdot | \vx_{<t})$ that has a significantly smaller memory footprint than the \emph{verifier model} $q(\cdot | \vx_{<t})$. At a time step $t$, SpS repeatedly samples $m \in \sN_{+}$ tokens $x_{t}, \dots, x_{t+m-1}$ from $p$ in an auto-regressive manner. Then the verifier evaluates the probabilities of the tokens and calculates the \emph{acceptance rate} $\phi (x_{t+i} | \vx_{<t+i}) = \min\{1, q(x_{t+i} | \vx_{<t+i})/p(x_{t+i} | \vx_{<t+i})\}$ for all $i \in [0, m)$. If any token $x_{t+j}$ is rejected, then tokens $x_{t+j+1}, \dots, x_{t+m-1}$ are also discarded. As a backup, SpS resamples $x_{t+j}$ using the \emph{recover distribution} $g(x_{t+j} | \vx_{<t+j}) \propto (q(\cdot|\vx_{<t+j}) - p(\cdot | \vx_{<t+j}))_{+}$, where $(\cdot)_+$ denotes $\max(0, \cdot)$. The final accepted tokens are $x_t, \dots, x_{t+j}$. By this draft-and-verify scheme, SpS accelerates auto-regressive decoding by avoiding the need to load the large verifier model $q$ from memory at every step. This approach has been shown effective in practice~\citep{zhou2024survey,hu2025speculative}.

We formalize the draft-and-verify scheme as Algorithm~\ref{alg:generalization}. Under this setting, we can show that the vanilla SpS algorithm~\citep{chen2023accelerating} produces any target distribution with an optimal acceptance rate.

\begin{restatable}{observation}{generalization}\label{thm:generalization}
Consider any desired target distribution $h$ and draft model $p$. Algorithm~\ref{alg:generalization} produces the target distribution $h$ exactly if the acceptance rate and recover distribution are defined as
\begin{align}
    & \phi(x_t | \vx_{<t}) = \min\left\{\frac{h(x_t | \vx_{<t})}{p(x_t | \vx_{<t})}, 1\right\} \\
    \text{and} \quad & g(x_t | \vx_{<t}) = \frac{h(x_t | \vx_{<t}) - p(x_t | \vx_{<t}) \phi(x_t | \vx_{<t})}{\E_{x' \sim p}[1 - \phi(x' | \vx_{<t})]},
\end{align}
respectively. In addition, this acceptance rate $\phi$ is optimal for achieving the highest acceptance rate.
\end{restatable}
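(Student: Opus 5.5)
The proof has two parts: correctness (the output is distributed as $h$) and optimality (no valid $\phi$ accepts more often). Throughout I fix the prefix $\vx_{<t}$ and suppress it, writing $p(x)$, $h(x)$, $\phi(x)$ and $g(x)$. I assume Algorithm~\ref{alg:generalization} works like SpS: draw $x \sim p$, accept with probability $\phi(x)$, and otherwise resample from $g$. Multi-token drafting then follows by induction over positions. After a rejection the later drafts are discarded, so each emitted token is produced by one draft-and-verify step conditioned on its own prefix. It therefore suffices to show that this single step outputs $h$.

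\textbf{Correctness.} The output probability of token $x$ is
\begin{equation*}
p(x)\phi(x) + \Big(\sum_{x'} p(x')\big(1-\phi(x')\big)\Big)\, g(x).
\end{equation*}
Substituting the stated $g$, the normalizer $\E_{x'\sim p}[1-\phi(x')]$ cancels against the rejection mass, and the total collapses to $h(x)$. I must also check that $g$ is a valid distribution.
\begin{itemize}
\item Nonnegativity: $p\phi = \min\{h,p\} \le h$, so $g(x) \ge 0$.
\item Normalization: $\sum_x \big(h(x) - \min\{h(x),p(x)\}\big) = 1 - \sum_x \min\{h(x),p(x)\}$, which equals $\E_p[1-\phi]$. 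Hence $g$ sums to $1$.
\end{itemize}
The degenerate case $\E_p[1-\phi] = 0$ means $p = h$. Then rejection never happens and $g$ is irrelevant.

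\textbf{Optimality.} Take any acceptance function $\psi$ with $0 \le \psi \le 1$ and any valid recover distribution $g'$ for which the algorithm outputs $h$. The probability of emitting $x$ via acceptance is $p(x)\psi(x)$, and it cannot exceed the total output probability $h(x)$, since the rejection branch contributes a nonnegative amount. So $p(x)\psi(x) \le h(x)$. Also $\psi(x) \le 1$ gives $p(x)\psi(x) \le p(x)$. Combining, $p(x)\psi(x) \le \min\{h(x),p(x)\} = p(x)\phi(x)$ for every $x$. Summing over $x$ gives $\E_p[\psi] \le \E_p[\phi]$, so $\phi$ maximizes the overall acceptance probability. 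In fact $\phi$ is pointwise maximal wherever $p(x) > 0$.

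The main obstacle is stating the notion of optimality precisely. I would formalize it as: among all pairs $(\psi, g')$ under which Algorithm~\ref{alg:generalization} outputs exactly $h$, the acceptance probability $\E_p[\psi]$ is largest for the stated $\phi$. The bound also shows the optimal value is $1 - \mathrm{TV}(p,h)$. For the multi-step setting, the pointwise bound holds at every prefix. Summing it per position shows that the probability of accepting each successive draft token, and hence the expected number of accepted tokens, is maximized as well.
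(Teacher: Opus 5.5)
Your proposal is correct and follows essentially the paper's route: write the one-step output as $p(x)\phi(x) + \E_{x'\sim p}[1-\phi(x')]\,g(x)$, match it to $h$, and derive optimality from two facts, namely that the acceptance branch cannot exceed $h$ (equivalently, $g\ge 0$) and that $\phi\le 1$. Your version is a bit more careful than the appendix: you prove the necessity direction $p\psi\le h$ for an arbitrary valid scheme (the paper's chain is written with $\impliedby$), check normalization and the degenerate case $p=h$, and sketch the multi-token extension, which relies on your SpS-style reading that the bonus token after $m$ acceptances is drawn from $h$ and not from $g$ as Algorithm~\ref{alg:generalization} literally states.
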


\begin{proof}
See Appendix~\ref{apx:prf:generalization}.
\end{proof}

\begin{algorithm}[t]
    \caption{Generalized formulation of speculative sampling}
       \label{alg:generalization}
    \begin{algorithmic}[1]
       \REQUIRE sampling steps $m$, draft model $p$, acceptance rate $\phi$, and recover distribution $g$
        \STATE{$t \gets 1, \vx_{<t} \gets {[\text{BOS}]}$}
       \WHILE{not end}
            \STATE{}\COMMENT{Drafting $m$ tokens}
            \FOR{$i \gets 0, \dots, m-1$}
                \STATE{$x_{t+i} \sim p(\cdot|\vx_{<t+i})$} \hfill \COMMENT{$\vx_{<t+i}$ is concatenation of $\vx_{<t}$ and $[x_t, \dots, x_{t+i-1}]$}
                \STATE{$u_i \sim U(0, 1)$} \hfill \COMMENT{$U(0,1)$ is the uniform distribution between $[0, 1]$}
            \ENDFOR

            \STATE{$c \gets \min \{ j: u_j >\phi(x_{t+j}|\vx_{<t+j}) \} \bigcup \{m\}$} \hfill \COMMENT{$c$ is the length of accepted draft tokens}
            \STATE{$x_{t+c} \sim g(\cdot | \vx_{<t+c})$} \hfill \COMMENT{$x_{t+c}$ is sampled from the recover distribution}
            \STATE{$t \gets t + c + 1$}
        \ENDWHILE    
    \end{algorithmic}
\end{algorithm}

\subsection{Approximating SpS as constrained optimization}\label{sec:approach:approx}
Observation~\ref{thm:generalization} provides a foundation to produce an arbitrary target distribution with the optimal design under the draft-and-verify scheme. Our insight is that, instead of producing a fixed verifier distribution $q$ as the target distribution $h$ like SpS, we may utilize this observation to dynamically select a distribution $h$ close to $q$ while yielding higher acceptance rates based on function $\phi$. This can be formulated as a constrained optimization problem.

For each step $t$, assume the drafted token has index $n$. Let $\vh \in \sR^{|V|}$ be the parameters to be optimized. The ideal $h$ is given by $h(i | \vx_{<t}) = \vh_i^*$, where $\vh^*$ is the solution of the following problem:
\begin{align}
        \max_{\vh} \ \ & \min \{ h_n / p(n| \vx_{<t}), 1 \} \label{def:instance-prob} \\
        \mathrm{s.t.\ } & \vh \in \Delta^{|V|-1} \\
        &  D_f(\vh \| q(\cdot | \vx_{<t})) \le \delta.
\end{align}
Here, the hyper-parameter $\delta \ge 0$ controls the closeness to the verifier model $q$, and $D_f$ is any $f$-divergence metric used to measure the distance between $q$ and $h$. Once the optimal $\vh$ is found, we can then derive the corresponding $\phi$ and $g$ by Observation~\ref{thm:generalization}.

The above formulation falls into the framework of constrained convex optimization, which we show has the following solution.

\begin{restatable}{theorem}{generalsolution}\label{thm:general-solution}
The optimal solution of $\vh$ in objective~\eqref{def:instance-prob} is 
\begin{align}
  h_i = \begin{cases}
      \gamma^*, & \text{if } i=n, \\
      \frac{1-\gamma^*}{1-q(n | \vx_{<t})} q(i|\vx_{<t}), & \text{otherwise},
  \end{cases}\label{eq:hsol}
\end{align}
where $\gamma^*$ is any root of the equation
\begin{align}
    \delta = q(n | \vx_{<t}) f\left(\frac{\gamma}{q(n | \vx_{<t})}\right) + (1 - q(n | \vx_{<t})) f\left(\frac{1 - \gamma}{1-q(n | \vx_{<t})}\right)
\end{align}
over the interval $[q(n|\vx_{<t}), +\infty)$, clamped into $[q(n|\vx_{<t}), 1]$. The function $f$ is the one used in the definition of $f$-divergence.
\end{restatable}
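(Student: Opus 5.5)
Write $q_n = q(n|\vx_{<t})$ and $q_i = q(i|\vx_{<t})$. The objective $\min\{h_n/p(n|\vx_{<t}),1\}$ is nondecreasing in $h_n$ and depends on $\vh$ only through $h_n$. So the plan is to reduce the problem to one variable. For each candidate value $\gamma = h_n$, I will find the smallest achievable divergence. Then I will push $\gamma$ as high as the budget $\delta$ allows. Increasing $h_n$ beyond the point where $h_n \ge p(n|\vx_{<t})$ does not improve the objective. Hence any maximizer of $h_n$ under the constraints is optimal, and the clamped solution is fine whenever the clamp is not active.

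\textbf{Step 1 (inner problem).} Fix $h_n=\gamma$ and minimize $D_f(\vh\|q)=\sum_i q_i f(h_i/q_i)$ over the remaining coordinates, subject to $\sum_{i\ne n} h_i = 1-\gamma$ and $h_i\ge 0$. Since $f$ is convex, Jensen's inequality with weights $q_i/(1-q_n)$ gives
\begin{align}
\sum_{i\ne n} q_i f\!\left(\frac{h_i}{q_i}\right) \ge (1-q_n)\, f\!\left(\sum_{i\ne n}\frac{q_i}{1-q_n}\frac{h_i}{q_i}\right) = (1-q_n) f\!\left(\frac{1-\gamma}{1-q_n}\right).
\end{align}
Equality holds when $h_i/q_i$ is constant, i.e.\ $h_i = \frac{1-\gamma}{1-q_n} q_i$. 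This is exactly the form in \eqref{eq:hsol}. The minimal divergence for a given $\gamma$ is therefore
\begin{align}
F(\gamma) = q_n f(\gamma/q_n) + (1-q_n) f\bigl((1-\gamma)/(1-q_n)\bigr).
\end{align}
I will treat the degenerate case $q_n=1$ separately: then $\vh=q$ is forced up to the constraint, and the formula holds trivially.

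\textbf{Step 2 (outer problem).} Maximize $\gamma$ subject to $F(\gamma)\le\delta$ and $\gamma\in[0,1]$. The function $F$ is convex in $\gamma$ and satisfies $F(q_n)=f(1)=0$, so $q_n$ is its minimizer. Hence $F$ is nondecreasing on $[q_n,\infty)$. The feasible set $\{F\le\delta\}$ is an interval containing $q_n$, and its right endpoint is the largest root of $F(\gamma)=\delta$ on $[q_n,\infty)$. If no root exists, or the root exceeds $1$, the right endpoint is given by the simplex constraint $\gamma\le 1$, which explains the clamp into $[q_n,1]$. Values of $\gamma$ below $q_n$ are never better than $q_n$, which is always feasible. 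This justifies restricting the search to $[q_n,\infty)$.

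\textbf{Main obstacle.} The delicate point is the phrase ``any root''. $F$ may be constant (flat) on part of $[q_n,\infty)$, for example when $\delta=0$ or $f$ is not strictly convex, so several roots can exist. I will argue that any root, after clamping, is still optimal. Feasibility is immediate. For optimality, I will use the fact that the objective saturates at $p(n|\vx_{<t})$, together with monotonicity. The step I would check first is whether every root yields the same objective value. If it does not, I would reinterpret ``any root'' as the largest root, i.e.\ the supremum of the sublevel set, and note that the two coincide when $f$ is strictly convex. I also need to verify that $F$ is well defined for $\gamma$ slightly above $1$, where $(1-\gamma)/(1-q_n)<0$. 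If $f$ is only defined on $[0,\infty)$, I will restrict $\gamma$ to $[q_n,1]$ from the start, and the clamp then handles the case in which no root exists.
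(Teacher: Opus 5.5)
Your proposal is correct and follows the same route as the paper. You fix $h_n=\gamma$, use Jensen's inequality (the paper's Lemma~\ref{lem:minimal-div}) to show that the proportional allocation $h_i=\frac{1-\gamma}{1-q_n}q_i$ minimizes the divergence over $i\neq n$, then push $\gamma$ up until the resulting scalar constraint binds and clamp to $[q_n,1]$. Your convexity argument supplies the justification the paper only asserts for the tightness of the constraint; note that $F\ge 0$ by the two-point Jensen bound is what actually makes $q_n$ a minimizer. Your repairs are also warranted: when no root exists one takes $\gamma^*=1$, and when $\delta=0$ with $f$ affine near $1$, $F$ vanishes on an interval, so an arbitrary root such as $q_n$ need not be optimal and the largest root is the right choice.
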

\begin{proof}
See Appendix~\ref{apx:prf:general-solution}.
\end{proof}

Theorem~\ref{thm:general-solution} theoretically characterizes the trade-off between closeness to the verifier model $q$ and the acceptance rate induced by $\phi$. In particular, the theorem suggests that the drafted token now has at least the same or a higher chance of being accepted (since $\gamma^* \ge q_n$). For other non-sampled tokens, their probabilities are scaled down proportionally so that $h$ remains a valid distribution.

It is worth noting that, since the solved $\vh$ in Equation~\eqref{eq:hsol} depends on the sampled token $n$, the solution is  different for different sampled tokens. As a result, the effective distribution of the overall algorithm $\vh_\text{alg}$ might have a divergence other than $\delta$ from the target distribution $q$. To this end, we provide the following theorem to guarantee the controlled divergence of the effective distribution.

\begin{restatable}{theorem}{hbound}\label{thm:hbound}
   Let $\phi_n$ and $g_n$ denote the functions that follow the solution in Theorem~\ref{thm:general-solution} when the sampled token is $n$. The distribution of the overall algorithm is given by \begin{align}
       \vh_\text{alg} = \sum_{n\in [|V|]}p(n | \vx_{<t}) \left[\phi_n(n) \ve_n + (1-\phi_n(n)) \vg_n \right],
   \end{align}
   where $\ve_n$ is a one-hot vector with only non-zero element at index $n$.
   In addition, \begin{align}
       D_f(\vh_\text{alg} \| q(\cdot | \vx_{<t})) \le \min\{\Gamma(\delta), D_f(p(\cdot | \vx_{<t}) \| q(\cdot | \vx_{<t})) \}
   \end{align}
   for any $\delta \ge 0$. Here, the function $\Gamma: [0, +\infty) \to [0, +\infty]$ is continuous and non-decreasing in $\delta$ with a value of $0$ at $\delta=0$.
\end{restatable}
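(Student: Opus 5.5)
The first claim follows from reading off the sampling law of one iteration of Algorithm~\ref{alg:generalization}, conditioned on $\vx_{<t}$. The drafted token is $n$ with probability $p(n|\vx_{<t})$. Given $n$, the algorithm uses the target $\vh^{(n)}$ from Theorem~\ref{thm:general-solution} together with the $\phi_n,g_n$ that Observation~\ref{thm:generalization} associates with it. It then emits $n$ with probability $\phi_n(n)$, and otherwise draws from $\vg_n$. Summing over $n$ gives exactly $\vh_\text{alg}=\sum_n p(n)\,\vr_n$, where
\begin{align}
\vr_n=\phi_n(n)\ve_n+(1-\phi_n(n))\vg_n .
\end{align}
Each $\vr_n$ is a probability vector, so $\vh_\text{alg}$ is a mixture. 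I will record two limiting cases for later use. At $\delta=0$, the root is $\gamma^*=q_n$, so $\vh^{(n)}=q$ for every $n$; the procedure then collapses to vanilla SpS and $\vh_\text{alg}=q$ by Observation~\ref{thm:generalization}. For large $\delta$, the clamp gives $\gamma^*\ge p_n$ whenever the constraint allows it, so $\phi_n(n)=1$.

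For the $\Gamma(\delta)$ part, I will first show that $\delta\mapsto\gamma^*(\delta)$ can be chosen continuous and non-decreasing. The right-hand side of the defining equation is $D_f$ between the two-point distributions $(\gamma,1-\gamma)$ and $(q_n,1-q_n)$. It is convex in $\gamma$ and vanishes at $\gamma=q_n$, hence non-decreasing on $[q_n,\infty)$. Taking the largest root and then clamping therefore gives a monotone choice. Continuity holds wherever the map is strictly increasing, and I would select the root to handle flat pieces. It follows that $\vh^{(n)}$, $\phi_n$, $\vg_n$ and hence $\vh_\text{alg}$ depend continuously on $\delta$. Here the denominator $\E_p[1-\phi_n]$ needs care when it tends to $0$. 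In that regime the rejection mass $(1-\phi_n(n))$ multiplying $\vg_n$ also tends to $0$, so the product $(1-\phi_n(n))\vg_n$ stays bounded and continuous. I then define
\begin{align}
\Gamma(\delta)=\sup_{0\le\delta'\le\delta} D_f\big(\vh_\text{alg}(\delta')\,\|\,q\big).
\end{align}
By construction this is non-decreasing, takes values in $[0,+\infty]$, and equals $0$ at $\delta=0$. It is continuous because a running supremum of a continuous function is continuous, using lower semicontinuity and, on the support, continuity of $D_f$ in its first argument. A more informative $\Gamma$ could be obtained from joint convexity: $D_f(\vh_\text{alg}\|q)\le\sum_n p(n)D_f(\vr_n\|q)$. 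However, $\vr_n$ can be far from $q$ even when $\vh^{(n)}$ is close, so I would keep the running-supremum definition.

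The main obstacle is the bound $D_f(\vh_\text{alg}\|q)\le D_f(p\|q)$. My plan is to show that $\vh_\text{alg}=\lambda p+(1-\lambda)q$ for some $\lambda\in[0,1]$, or more generally that $\vh_\text{alg}$ is a convex combination of $p$ and $q$. Given that, convexity of $D_f(\cdot\|q)$ together with $D_f(q\|q)=0$ gives the result.

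To get there, I will expand the accepted mass on each token, $p_n\phi_n(n)=\min\{\gamma^*_n,p_n\}$, which lies between $\min\{p_n,q_n\}$ and $p_n$. I will also expand the rejected mass redistributed by the $\vg_m$. Because $\vh^{(m)}$ rescales all tokens other than $m$ proportionally to $q$, I expect the total redistributed mass on a token $x$ to take the form $c\,q_x$ up to the SpS residual $(q_x-p_x)_+$. That would express $\vh_\text{alg}$ as an interpolation between the SpS output $q$ and the always-accept output $p$.

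If exact collinearity fails, I would fall back to a data-processing argument. The idea is to write $\vh_\text{alg}=K p$ and $q=K' p$ for stochastic kernels and to compare them. Alternatively, I would check coordinatewise that $\min(p_x,q_x)\le h_{\text{alg},x}\le\max(p_x,q_x)$ and then try to combine this with the monotonicity of $f$ on either side of $1$.
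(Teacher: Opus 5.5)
Your reading-off of $\vh_\text{alg}$ matches the paper. Your running-supremum $\Gamma$ is a valid alternative to the paper's construction. The paper instead takes the supremum of $D_f(\vh_\text{alg}\|q)$ over \emph{all} families $(h_n)_n$ with $D_f(h_n\|q)\le\delta$, and gets continuity from compactness and convexity of $D_f(\cdot\|q)$. That sidesteps any root-selection issue. Your $\Gamma$ is tighter, but it needs $\delta\mapsto\gamma^*(\delta)$ to be continuous. This holds once $f$ is strictly convex at $1$, because the Bernoulli divergence is then strictly increasing on $[q_n,1]$.

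The gap is in the step you flag as the main obstacle, $D_f(\vh_\text{alg}\|q)\le D_f(p\|q)$, which the paper only asserts in one line. Your primary route is false.
\begin{itemize}
\item For a token $x$ with $p_x\ge q_x$, no rejected mass is ever redistributed onto $x$: for $m\ne x$ every target has $h^{(m)}_x\le q_x\le p_x$, and $g_x(x)=0$.
\item So $h_{\text{alg},x}=\min\{\gamma^*_x,p_x\}$, where $\gamma^*_x$ (the root when $x$ is drafted) depends only on $q_x$ and $\delta$, not on $p_x$.
\item Concretely, take $q=(0.2,0.2,0.6)$, $p=(0.5,0.3,0.2)$, and $\delta$ chosen so that $\gamma^*=0.25$ at $q_n=0.2$. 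Then $\vh_\text{alg}=(0.25,0.25,0.5)$.
\item The first two coordinates move fractions $1/6$ and $1/2$ of the way from $q$ to $p$, so $\vh_\text{alg}$ is not on the segment $[q,p]$.
\end{itemize}
The data-processing fallback also fails as stated. It needs a single kernel sending $p\mapsto\vh_\text{alg}$ and $q\mapsto q$, and the algorithm's kernel $n\mapsto\vr_n$ does not fix $q$.

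Your coordinatewise fallback is the right idea, but two ingredients are missing.

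\textbf{Betweenness when $p_x<q_x$.} This needs proof. In that case
$h_{\text{alg},x}=p_x+\sum_m w_m(c_mq_x-p_x)_+$,
where $c_m=\frac{1-\gamma^*_m}{1-q_m}\le1$ and $w_m=(p_m-\gamma^*_m)/Z_m$. The sum runs over drafted $m$ with $\gamma^*_m<p_m$, the only ones ever rejected. Here
$Z_m=\sum_{x'}(p_{x'}-h^{(m)}_{x'})_+\ge(p_m-\gamma^*_m)+\sum_{x'\ne m}(p_{x'}-q_{x'})_+$.
The map $a\mapsto a/(a+B)$ is increasing and $p_m-\gamma^*_m\le p_m-q_m$. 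Hence $w_m\le(p_m-q_m)/\tau$ with $\tau=\sum_{x'}(p_{x'}-q_{x'})_+$, i.e.\ each rejection weight is dominated by its SpS counterpart. Therefore $\sum_m w_m\le1$, and together with $(c_mq_x-p_x)_+\le q_x-p_x$ this gives $h_{\text{alg},x}\le q_x$.

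\textbf{Monotonicity of $f$.} A convex $f$ with $f(1)=0$ need not be monotone on either side of $1$. For KL, $t\log t$ is decreasing only on $(0,1/e)$, so the termwise comparison fails for $f$ itself. The fix is to replace $f$ by $\tilde f(t)=f(t)-s(t-1)$ with $s\in\partial f(1)$.
\begin{itemize}
\item This leaves $D_f$ unchanged on probability vectors.
\item $\tilde f$ is convex with minimum $0$ at $1$, hence non-increasing on $[0,1]$ and non-decreasing on $[1,\infty)$.
\item Since $h_{\text{alg},x}/q_x$ lies between $1$ and $p_x/q_x$, you get $q_x\tilde f(h_{\text{alg},x}/q_x)\le q_x\tilde f(p_x/q_x)$ for each $x$.
\item Summing over $x$ gives the bound.
\end{itemize}
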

\begin{proof}
    See Appendix~\ref{apx:prf:hbound}.
\end{proof}

In essence, despite the $\vh$ in Equation~\eqref{eq:hsol} is solved specifically for the sampled token $n$, the divergence between the overall distribution and the target distribution is still implicitly controlled. In particular, for any target divergence $0 \le \delta_\text{alg} < +\infty$ imposed on the overall algorithm, we can always find a proper $\delta \ge 0$ such that $D_f(\vh_\text{alg} \| q) \le  \Gamma(\delta) \le \delta_\text{alg}$. While $\Gamma$ does not admit a closed-form expression, $\delta$ itself is a hyper-parameter. In practice, one can tune $\delta$ to achieve the desired quality-throughput trade-off. This confirms the soundness of our framework.

In fact, our framework also offers a novel theoretical interpretation of typical acceptance sampling.
\begin{restatable}{proposition}{tas}\label{thm:tas}
    Typical acceptance sampling (TAS;~\citeauthor{cai2024medusa},~\citeyear{cai2024medusa}) implicitly solves a variant of the optimization problem in objective~\eqref{def:instance-prob}, where the $f$-divergence is substituted with the cross-entropy $H(\vh, q(\cdot | \vx_{<t}))$.
\end{restatable}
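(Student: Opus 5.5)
The plan is to make the correspondence explicit by choosing a state-dependent budget $\delta$ and reading off when the optimum of the modified problem gives acceptance probability one. I will take TAS in the form of \citet{cai2024medusa}: a drafted token $n$ is accepted (deterministically) iff
\begin{align}
q(n|\vx_{<t}) > \min\left\{\epsilon, \alpha \exp\left(-H(q(\cdot|\vx_{<t}))\right)\right\},
\end{align}
where $\epsilon,\alpha$ are hyper-parameters and $H(q)$ is the entropy of $q$. Otherwise it is rejected, and the verifier's choice (greedy or sampled) is used. Taking $-\log$ of both sides, the condition becomes $-\log q(n|\vx_{<t}) < \delta_t$ with
\begin{align}
\delta_t := -\log \min\left\{\epsilon, \alpha e^{-H(q(\cdot|\vx_{<t}))}\right\} = \max\left\{-\log\epsilon,\ H(q(\cdot|\vx_{<t})) - \log\alpha\right\}.
\end{align}
This $\delta_t$ will serve as the (context-dependent) budget in the variant problem.

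Next I will analyse objective~\eqref{def:instance-prob} with the constraint $D_f(\vh\|q)\le\delta$ replaced by $H(\vh,q) = -\sum_i h_i \log q(i|\vx_{<t}) \le \delta_t$. Because $H(\vh,q)$ is linear in $\vh$, the feasible set is the simplex intersected with a half-space, and the objective is monotone in $h_n$. The key observation is that the objective attains its maximal value $1$ with $h_n=1$, i.e.\ $\vh=\ve_n$, exactly when $\ve_n$ is feasible. Feasibility of $\ve_n$ means $H(\ve_n,q) = -\log q(n|\vx_{<t}) \le \delta_t$, which is precisely the TAS acceptance criterion up to the boundary case of strict versus non-strict inequality, a measure-zero tie that can be absorbed into the convention. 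In that case Observation~\ref{thm:generalization} gives $\phi(n)=\min\{1/p(n|\vx_{<t}),1\}=1$, so the token is always accepted, matching TAS.

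For the complementary case $-\log q(n|\vx_{<t})>\delta_t$, I would solve the linear program: the optimum places mass on $n$ and on the mode $i^*=\argmax_i q(i|\vx_{<t})$, with $h_n$ determined by making the cross-entropy constraint tight. TAS, however, rejects outright in this case. To bridge the gap, I plan to define the variant so that acceptance is restricted to be deterministic, $\phi\in\{0,1\}$; equivalently, the objective is $\mathbb{1}[h_n=1]$ rather than the relaxed ratio. Under this restriction the optimum is acceptance iff $\ve_n$ is feasible and rejection otherwise, which reproduces TAS exactly. I will also note that in the rejected case the recover step can be read as sampling from (or taking the mode of) a feasible $\vh$ concentrated near $q$. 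This mirrors TAS's fallback to the verifier's own prediction.

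The main obstacle is this last step: TAS's hard accept/reject rule is not literally the maximiser of the relaxed ratio objective. The delicate point is therefore stating the ``variant'' precisely enough, namely the cross-entropy constraint, the entropy-dependent budget $\delta_t$, and the binary acceptance, so that the equivalence is exact rather than only matching in the accept region. I would first try the indicator-objective formulation. If that seems too artificial, the fallback is the weaker claim that TAS accepts exactly on the set where the relaxed variant's optimal value equals $1$.
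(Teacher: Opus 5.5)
Your core step matches the paper's. The paper imposes $H(\vh,q)\le H(q)+\delta$, reduces to a two-point distribution on $n$ and the mode $m=\argmax_i q(i)$, and finds that the largest feasible $h_n$ equals $1$ iff $q(n)\ge e^{-H(q)}e^{-\delta}$. That is the TAS threshold with $\alpha=e^{-\delta}$. Your observation that this is just feasibility of $\ve_n$, since $H(\ve_n,q)=-\log q(n)$, reaches the same condition more directly. Your budget $\delta_t$ also absorbs the $\epsilon$ threshold, which the paper only mentions in a closing remark.

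The gap is in the steps you add to make the match exact. The quantity that corresponds to TAS is $\gamma^*=\max\{h_n:\vh \text{ feasible}\}=1$, not ``the relaxed objective has optimal value $1$''. These two conditions differ. For $\alpha\le 1$ you have $\delta_t\ge H(q)=H(q,q)$, so $\vh=q$ is always feasible. The relaxed value is therefore $1$ whenever $\gamma^*\ge p(n)$, in particular whenever $q(n)\ge p(n)$, and this $p$-dependent condition is one TAS ignores. Concretely, take $q=(0.9,0.05,0.05)$, $\epsilon=0.09$, $\alpha=0.3$, $n=2$, $p(n)=0.01$. TAS rejects ($0.05<0.09$), yet the relaxed value is $1$.

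So your fallback claim is false; only ``TAS accepts $\Rightarrow$ relaxed value $1$'' holds. Your sentence about attaining $1$ ``with $h_n=1$ exactly when $\ve_n$ is feasible'' conflates the two conditions. For the same reason, restricting $\phi$ to $\{0,1\}$ is \emph{not} equivalent to an objective that indicates $\{h_n=1\}$. Observation~\ref{thm:generalization} only requires $\phi(n)\le h_n/p(n)$, so deterministic acceptance is admissible as soon as some feasible $\vh$ has $h_n\ge p(n)$. Even with the indicator objective, in the reject case all feasible $\vh$ tie at value $0$, so the optimization does not by itself force $\phi(n)=0$. Taking $\vh=q$, for instance, gives $\phi(n)=\min\{q(n)/p(n),1\}$, which may be $1$.

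The fix is the one the paper alludes to: add the constraint $H(\vh)=0$, i.e.\ restrict $\vh$ to point masses, and keep the original objective. The feasible set becomes $\{\ve_i:-\log q(i)\le\delta_t\}$. It always contains $\ve_m$, because $-\log q(m)\le H(q)\le\delta_t$ for $\alpha\le 1$. The objective $\min\{h_n/p(n),1\}$ equals $1$ at $\ve_n$ and $0$ at every other vertex. Hence Observation~\ref{thm:generalization} gives $\phi(n)=1$ exactly when $\ve_n$ is feasible, which is exactly the TAS criterion. Otherwise, choosing $\ve_m$ among the tied optima gives $\phi(n)=0$ and $g=\ve_m$, which is the greedy version of TAS's fallback. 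Your argument goes through with this replacement, or by stating the correspondence at the level of $\gamma^*=1$ as the paper does.
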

\begin{proof}
    See Appendix~\ref{apx:prf:tas}.
\end{proof}

The suboptimality of TAS arises from the nature of cross-entropy. Specifically, the cross-entropy can be decomposed as
\begin{align}
    H(\vh, q(\cdot| \vx_{<t})) = \underbrace{D_\text{KL}(\vh \| q(\cdot | \vx_{<t}))}_\text{Mode capturing} + \underbrace{H(\vh).}_\text{Certainty}
\end{align}
Here, the KL divergence encourages $h$ to focus on the mode of $q$ (since $h$ is the first argument), while the entropy term encourages $h$ to be deterministic. However, the summation allows $h$ to collapse into a deterministic distribution at the expense of increasing divergence, therefore failing to capture the full shape of $q$. In fact, TAS always yields $h$ with entropy 0 while increasing the divergence by at least $H(q)$. As a result, the produced distribution may diverge significantly from the verifier model, especially when $q$ carries high entropy and thus rich information.

\subsection{\name: constrained acceptance speculative sampling}\label{sec:approach:method}

Based on our analysis above, we propose to use only the KL divergence as the measure of ``distance''. Specifically, this corresponds to the function $f(t) = t \log t$. Combined with our Theorem~\ref{thm:general-solution}, $\gamma^*$ is the root of
\begin{align}
    \Phi(\gamma) :=& q(n | \vx_{<t}) f\left(\frac{\gamma}{q(n | \vx_{<t})}\right) + (1 - q(n | \vx_{<t})) f\left(\frac{1 - \gamma}{1 - q(n | \vx_{<t})}\right) \\
    =& \gamma \log \left(\frac{\gamma}{q(n | \vx_{<t})}\right) + (1 - \gamma) \log\left(\frac{1 - \gamma}{1 - q(n | \vx_{<t})}\right) \\
    =& \delta.
\end{align}

However, since $\Phi$ is a transcendental function involving terms like $x \log x$, it cannot be solved in closed form. We therefore approximate $\Phi$ by its second-order Taylor series expanded at $\gamma_0 = q(n| \vx_{<t})$:
\begin{align}
    \Phi(\gamma) \approx \Phi(\gamma_0) + \Phi'(\gamma_0)(\gamma - \gamma_0) + \frac{\Phi''(\gamma_0)}{2} (\gamma - \gamma_0)^2. \label{eq:approx}
\end{align}
This approximation is justified by noting that $\delta$ is typically small and $\gamma^*$ remains close to $q(n| \vx_{<t})$.

\begin{restatable}[\name's solution]{corollary}{solution}\label{thm:solution}
Let the $f$-divergence in objective~\eqref{def:instance-prob} be the KL divergence. The solution to Equation~\eqref{eq:approx} is given by
\begin{align}\label{eq:sol}
  h(i | \vx_{<t}) = \begin{cases}
      \gamma^*, & \text{if } i=n, \\
      \frac{1 - \gamma^*}{1 - q(n | \vx_{<t})} q(i|\vx_{<t}), & \text{otherwise},
  \end{cases}
\end{align}
where $\gamma^* = \min\Big\{q(n|\vx_{<t}) + \sqrt{2\delta q(n|\vx_{<t})(1 - q(n|\vx_{<t}))}, 1\Big\}$.
\end{restatable}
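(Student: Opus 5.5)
The plan is to combine Theorem~\ref{thm:general-solution} with the quadratic approximation~\eqref{eq:approx}. Theorem~\ref{thm:general-solution} already fixes the form of the optimal $\vh$: the drafted coordinate is set to $\gamma^*$ and the remaining mass is spread proportionally to $q$. So once $\gamma^*$ is pinned down, the expression~\eqref{eq:sol} for $h(i\mid\vx_{<t})$ follows immediately. The only work is to solve the Taylor-approximated equation $\Phi(\gamma)\approx\delta$ on $[q_n,+\infty)$ and then clamp the root into $[q_n,1]$. Here $q_n := q(n\mid\vx_{<t})$.

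\textbf{Step 1: Taylor coefficients of $\Phi$ at $\gamma_0=q_n$.}
\begin{itemize}
\item $\Phi(q_n)=q_n\log 1+(1-q_n)\log 1=0$.
\item Differentiating gives $\Phi'(\gamma)=\log(\gamma/q_n)-\log((1-\gamma)/(1-q_n))$, since the constant terms $+1$ and $-1$ cancel. Hence $\Phi'(q_n)=0$. This is expected, because $\gamma=q_n$ gives $\vh=q$, which minimizes the divergence.
\item $\Phi''(\gamma)=1/\gamma+1/(1-\gamma)$, so $\Phi''(q_n)=1/(q_n(1-q_n))$.
\end{itemize}
The approximation~\eqref{eq:approx} therefore reduces to
\[
\frac{(\gamma-q_n)^2}{2q_n(1-q_n)}=\delta .
\]

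\textbf{Step 2: solve the quadratic.} It has the two roots $\gamma=q_n\pm\sqrt{2\delta q_n(1-q_n)}$. Theorem~\ref{thm:general-solution} restricts attention to roots in $[q_n,+\infty)$, so we keep the root with the $+$ sign. Clamping it into $[q_n,1]$ only matters at the upper end, because the root is already at least $q_n$. This yields $\gamma^*=\min\{q_n+\sqrt{2\delta q_n(1-q_n)},1\}$.

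\textbf{Step 3: plug into the form of $\vh$.} Substituting $\gamma^*$ into~\eqref{eq:hsol} gives~\eqref{eq:sol}.

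\textbf{Main obstacle: the degenerate case $q_n=1$.} Here both $1-q_n$ and $\Phi''(q_n)$ are problematic, since the second derivative blows up and the factor $(1-\gamma^*)/(1-q_n)$ becomes $0/0$. I would handle it separately. In this case the root formula gives $\gamma^*=1$, so $h=\ve_n=q$, and the "otherwise" branch is read as $0$, consistent with all other $q_i=0$. For $q_n=0$ the formula gives $\gamma^*=0$. This is consistent with the fact that any $h_n>0$ would make the KL divergence to $q$ infinite, so $h=q$ is the only feasible choice. Apart from these edge cases, the argument is a direct computation.
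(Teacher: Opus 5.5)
Your proposal is correct and follows the paper's proof essentially step for step: you compute $\Phi(q_n)=\Phi'(q_n)=0$ and $\Phi''(q_n)=1/(q_n(1-q_n))$, take the root of the resulting quadratic in $[q_n,+\infty)$, clip it to $[q_n,1]$, and substitute it into the form given by Theorem~\ref{thm:general-solution}. Your separate handling of the degenerate cases $q_n\in\{0,1\}$ is a harmless addition that the paper omits.
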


\begin{proof}
See Appendix~\ref{apx:prf:solution}
\end{proof}

In other words, \name modifies the distribution of the verifier model by increasing the probability of the candidate token $n$ by a small ``bonus'' determined jointly by $q(n|\vx_{<t})$ and $\delta$. We further show that \name's solution is more conservative than the exact solution when the verifier is less confident, ensuring that it strictly satisfies the divergence constraint in such cases.

\begin{restatable}{corollary}{conservative}\label{thm:conservative}
When the exact solution $\gamma^*$ is not greater than $0.5$ (i.e., the token is not likely to be accepted), our approximation always satisfies the divergence constraint:
\begin{align}
    D_\text{KL}(h \| q)  \le \delta,
\end{align}
where $h(n | \vx_{<t})$ is given by the approximated solution in Equation~\eqref{eq:sol}.
\end{restatable}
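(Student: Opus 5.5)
Write $q_n = q(n|\vx_{<t})$. For the $h$ of Corollary~\ref{thm:solution}, $D_\text{KL}(h\|q)$ reduces exactly to $\Phi(\gamma)$, where $\gamma = h(n|\vx_{<t})$. This holds because every $i\neq n$ has likelihood ratio $(1-\gamma)/(1-q_n)$, so those terms collapse into the single term $(1-\gamma)\log\frac{1-\gamma}{1-q_n}$. The statement is therefore the one-dimensional claim $\Phi(\hat\gamma)\le \delta = \Phi(\gamma^*)$, where $\hat\gamma = q_n + \sqrt{2\delta q_n(1-q_n)}$ is the approximate root and $\gamma^*\in[q_n,1]$ is the exact root from Theorem~\ref{thm:general-solution}. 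The approach is to compare $\Phi$ with its quadratic Taylor model $Q(\gamma) = \frac{(\gamma-q_n)^2}{2q_n(1-q_n)}$.

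First I compute the derivatives: $\Phi(q_n)=0$, $\Phi'(\gamma)=\log\frac{\gamma}{q_n}-\log\frac{1-\gamma}{1-q_n}$ so $\Phi'(q_n)=0$, and $\Phi''(\gamma)=\frac{1}{\gamma(1-\gamma)}$. This confirms that $\hat\gamma$ solves $Q(\gamma)=\delta$. Since $\Phi$ is increasing on $[q_n,1]$ (because $\Phi'>0$ there), it suffices to show $\hat\gamma\le\gamma^*$.

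The key step is to show $\Phi(\gamma)\ge Q(\gamma)$ on $[q_n,\gamma^*]$ when $\gamma^*\le 1/2$. Both functions vanish together with their first derivatives at $q_n$. So it is enough to compare second derivatives: $\Phi''(s)=\frac{1}{s(1-s)}$ versus $Q''=\frac{1}{q_n(1-q_n)}$. The function $s(1-s)$ is increasing on $[0,1/2]$, and for $q_n\le s\le\gamma^*\le 1/2$ this gives $s(1-s)\ge q_n(1-q_n)$, i.e.\ $\Phi''(s)\le Q''$. I expect this to be the main obstacle, because the inequality comes out in the wrong direction: it yields $\Phi\le Q$, hence $\Phi(\hat\gamma)\le Q(\hat\gamma)=\delta$ directly, provided $\hat\gamma$ also lies in the region $\le 1/2$.

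So the argument should be run the other way. On $[q_n,1/2]$ we have $\Phi\le Q$. Then $Q(\gamma^*)\ge\Phi(\gamma^*)=\delta=Q(\hat\gamma)$, and since $Q$ is increasing on $[q_n,\infty)$, this gives $\hat\gamma\le\gamma^*\le 1/2$. Hence $\hat\gamma$ lies in the region where $\Phi$ is increasing, and $\Phi(\hat\gamma)\le\Phi(\gamma^*)=\delta$; alternatively, $\Phi(\hat\gamma)\le Q(\hat\gamma)=\delta$ directly. The comparison $\Phi\le Q$ on $[q_n,1/2]$ follows by integrating twice: $\Phi(\gamma)=\int_{q_n}^{\gamma}(\gamma-s)\Phi''(s)\,ds\le\int_{q_n}^{\gamma}(\gamma-s)Q''\,ds=Q(\gamma)$. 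The clamping to $1$ in Corollary~\ref{thm:solution} is irrelevant here, since $\hat\gamma\le 1/2<1$.

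Two edge cases remain. If $q_n\ge 1/2$, the hypothesis $\gamma^*\le 1/2$ together with $\gamma^*\ge q_n$ forces $q_n=\gamma^*=1/2$ and $\delta=0$, so $\hat\gamma=q_n$ and the claim is trivial. The case $q_n=0$ gives $\hat\gamma=0$ and is also trivial.
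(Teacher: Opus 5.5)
Your proof is correct and takes essentially the same route as the paper. Both arguments bound the Bernoulli KL $\Phi$ above by its quadratic Taylor model on $[q_n,\tfrac12]$, because $\Phi''(s)=1/(s(1-s))$ is decreasing there; the paper phrases this as $\Phi'''\le 0$ in the Lagrange remainder, while you use the integral remainder, and both then conclude $\Phi(\hat\gamma)\le\delta$. Keep your step $Q(\gamma^*)\ge\Phi(\gamma^*)=\delta=Q(\hat\gamma)\Rightarrow\hat\gamma\le\gamma^*\le\tfrac12$: the paper's proof simply assumes $\hat\gamma\le\tfrac12$ rather than deriving it from the stated hypothesis $\gamma^*\le\tfrac12$, so this step closes a gap. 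Do drop the detour in which you aim for $\Phi\ge Q$, though, since $\Phi\le Q$ is exactly the direction you need.
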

\begin{proof}
    See Appendix~\ref{apx:prf:conservative}.
\end{proof}

It is easy to see that the bonus probability attains its maximum when $q(n|\vx_{<t}) = 0.5$. In practice, LLMs generally have more than 100K tokens~\citep{dubey2024llama,qwen2024qwen205}, so a probability around $0.5$ indicates strong model confidence in the token. However, SpS could still reject the token $n$ solely because the draft model is overconfident (i.e., $p(n|\vx_{<t})$ is large). Cactus increases the acceptance likelihood in such scenarios by modifying the verifier distribution accordingly.

Compared with TAS's criterion function, \name only requires reading the probability at token $n$ rather than accessing the full vocabulary. This allows \name to further reduce memory access overhead, especially in large-vocabulary settings. More importantly, \name's divergence is tightly controlled with minimal entropy change, whereas TAS yields only low-entropy solutions.

\section{Experiments}\label{sec:exp}
\subsection{Settings}
\paragraph{Datasets.} We evaluated \name on three popular benchmark datasets for large language models: (1) The \textbf{GSM8K}~\citep{cobbe2021training} test set contains 1.3K high-quality grade school math word problems, designed to assess a model's ability to apply mathematics to real-world scenarios. Following common practice in LM-Eval~\citep{eval-harness}, we used 5-shot examples for each test instance. The final accuracy score is averaged over all samples. (2) The \textbf{IFEval}~\citep{zhou2023instruction0following} benchmark measures instruction-following ability. It consists of 500 ``verifiable instructions'' whose outputs can be heuristically evaluated. For example, a prompt might be: ``Write a blog post with 400 or more words about the benefits of sleeping in a hammock,'' which can be automatically checked by counting the number of words. (3) The \textbf{GPQA}~\citep{rein2023gpqa0} diamond benchmark includes approximately 200 challenging science questions authored by domain experts, designed to test models' scientific knowledge. For instance, a sample question is: ``The angular size of the event horizon of a supermassive black hole in the centre of a galaxy at a distance of $d=10^{10}$ parsecs is measured to be $\theta=10^{-17}$ degrees. Find the order of magnitude of the entropy of the black hole.'' Following common practice~\citep{eval-harness}, we format the prompts to include four multiple-choice options. Models are then evaluated by generating a chain-of-thought~\citep{wei2022chain} followed by the final answer.

\paragraph{Evaluation metrics.} For all three tasks, the results are extracted from the generated text by regex matching with the corresponding format. These results are then compared with the gold labels using strict-match accuracy (i.e., $1$ if the strings are identical and $0$ otherwise). Final scores are obtained by averaging the accuracies over all samples. Following previous work~\citep{dubey2024llama}, the regex for GSM8K and GPQA is the ``flexible-extract'' pattern, which selects the first number in the generated sentence regardless of whether the model adheres to the few-shot examples. For IFEval, we use the ``prompt-level-strict-acc'' regex as defined in \cite{qwen2024qwen205}, which requires the model to strictly follow all the instructions.

In addition to task scores, we report the average acceptance length (AL) for all runs. Specifically, AL$_m$ refers to the expected number of accepted tokens among $m$ drafted tokens. A  higher AL$_m$ generally indicates faster generation. However, a method may artificially inflate AL by accepting low-quality draft tokens that lead to longer, less efficient chains of thought. Although AL remains high in this case, the behavior can lead to lower overall throughput due to unnecessarily lengthy outputs. To present a more complete picture of generation efficiency, we also measure the number of rejected tokens during generation, which reflects both the acceptance rate and the total length of generation.

\paragraph{Implementation details.} We used the Qwen~3 series as our main testbed for two reasons: (1) the models come in a variety of sizes, ranging from 0.6B to 14B parameters, enabling a wide range of choices of model pairs; (2) the models are trained to generate with internalized chain-of-thought reasoning~\citep{wei2022chain}, which makes them a natural use case for speculative sampling given the longer generation lengths~\citep{yang2025speculative}. For all experiments, we used the recommended generation parameters~\citep{yang2025qwen3}, where top-$p$ is set to 0.95, top-$k$ is set to $20$, and temperature is $0.6$.

\subsection{Main results}\label{sec:exp:main}
\begin{table}[t]
\caption{The results on three benchmarks: GSM8K, IFEval, and GPQA. We report the ``strict-match'' accuracy as the score with the standard regex pattern for each task. AL$_m$ indicates the number of accepted tokens when the draft length is $m$. Rej denotes the  total number of rejected tokens throughout generation in relative scale, where we use the SpS runs as the reference (labeled as ``Ref'').}\label{tab:main}
\begin{subtable}{\textwidth}
  \subcaption{The results of Qwen~3~8B as verifier and Qwen~3~0.6B as drafter.}
  \label{tab:main:a}
  \centering
  \resizebox{0.9\textwidth}{!}{%
  \begin{tabular}{llccccccccc}
    \toprule
    & & \multicolumn{3}{c}{GSM8K} & \multicolumn{3}{c}{IFEval} & \multicolumn{3}{c}{GPQA} \\
    \cmidrule(r){3-5}  \cmidrule(r){6-8} \cmidrule(r){9-11}
    $m$ & Name      &  Score$^\uparrow$ & AL$_{m}^\uparrow$  & Rej$^\downarrow$  &  Score$^\uparrow$ & AL$_{m}^\uparrow$  & Rej$^\downarrow$  &  Score$^\uparrow$ & AL$_{m}^\uparrow$  & Rej$^\downarrow$  \\
    \midrule 
    & Verifier & \multicolumn{2}{l}{84.31$_{\pm 0.47}$ -} & - &\multicolumn{2}{l}{84.66$_{\pm 0.56}$ -} & - & \multicolumn{2}{l}{41.07$_{\pm 1.77}$ -} & -  \\
    \midrule
    \multirow{4}{*}{10}& SpS  & 83.78 & 4.49 & Ref & 84.66 & 2.59 & Ref & 40.91 & 3.70 & Ref \\
    & TAS  & 86.58 & 5.49 & -29\% & 85.40 & 3.28 & -27\% & 41.41 & 5.17 & -42\% \\
    & \name 0.75 & 85.97 & 5.65 & -34\% & 85.03 & 3.40 & -31\% & 41.42 & 5.33 & -47\% \\
    & \name 1.0 & 86.35  & 5.72 & -37\%  & 84.10  & 3.44 & -32\% & 39.39 & 5.44 & -48\%  \\
    \midrule
    \multirow{4}{*}{20} & SpS  & 84.46 & 5.44 & Ref & 84.10 & 2.74 & Ref & 42.93 & 4.23 & Ref \\
    & TAS  & 85.51 & 7.23 & -35\%  &84.10  & 3.77 &-29\%  & 38.89 & 6.68& -46\% \\
    & \name 0.75 & 86.66 & 7.50 & -37\% & 85.95 & 3.76 & -30\% & 40.01 & 6.89 & -47\% \\
    & \name 1.0 & 86.43 & 7.61 & -39\% & 84.84 & 4.05 & -33\% & 39.90 & 7.05 & -49\% \\
    \bottomrule
  \end{tabular}}
\end{subtable}
\begin{subtable}{\textwidth}
  \vspace{0.75em}
  \subcaption{The results of Qwen~3~14B as verifier and Qwen~3~0.6B as drafter.}
  \label{tab:main:b}
  \centering
    \resizebox{0.9\textwidth}{!}{%
  \begin{tabular}{llccccccccc}
    \toprule
    & & \multicolumn{3}{c}{GSM8K} & \multicolumn{3}{c}{IFEval} & \multicolumn{3}{c}{GPQA} \\
    \cmidrule(r){3-5}  \cmidrule(r){6-8} \cmidrule(r){9-11}
    $m$ & Name      &  Score$^\uparrow$ & AL$_{m}^\uparrow$  & Rej$^\downarrow$  &  Score$^\uparrow$ & AL$_{m}^\uparrow$  & Rej$^\downarrow$  &  Score$^\uparrow$ & AL$_{m}^\uparrow$  & Rej$^\downarrow$  \\
    \midrule 
    & Verifier & \multicolumn{2}{l}{91.71$_{\pm 0.52}$ -} & - & \multicolumn{2}{l}{85.09$_{\pm 0.66}$ -} & - & \multicolumn{2}{l}{40.07$_{\pm 0.77}$ -} & -\\
    \midrule
    \multirow{4}{*}{10}& SpS & 91.12 & 4.27 & Ref & 85.03 & 2.19 & Ref & 39.39& 3.37 & Ref \\
    & TAS  & 92.65 & 5.24 & -30\% &  86.14 & 3.00 & -25\% & 38.89 & 4.99 & -46\% \\
    & \name 0.75 & 92.12 & 5.35 & -31\% & 86.87 & 3.04 & -29\% & 44.95 & 5.14 & -50\% \\
    & \name 1.0 & 93.10 & 5.44 & -32\% & 85.96 & 3.03 & -30\% & 43.43 & 5.16 & -51\% \\
    \midrule
    \multirow{4}{*}{20} & SpS & 91.89 & 5.11 & Ref & 84.47 & 2.27 & Ref & 40.91 & 3.84 & Ref \\
    & TAS  & 92.87 & 6.78 & -32\% & 85.03 & 3.49 & -27\% & 40.40 & 6.41 & -46\% \\
    & \name 0.75 & 92.15 & 7.15 &-36\%  & 86.69 & 3.45 & -30\% & 45.46 & 6.46 & -46\% \\
    & \name 1.0 & 92.87 & 7.00  & -34\% & 86.32 & 3.60 & -30\% & 45.46 & 6.74 & -50\% \\
    \bottomrule
  \end{tabular}}
\end{subtable}
\end{table}

As shown in Table~\ref{tab:main}, speculative sampling (SpS) serves as a strong baseline that closely preserves the output distribution of the verifier model. Across all three benchmarks (GSM8K, IFEval, and GPQA), SpS maintains similar accuracies to the verifier (e.g., 84.46 vs.~84.31 on GSM8K with $m=20$ in Table~\ref{tab:main:a}, and 91.89 vs.~91.71 in Table~\ref{tab:main:b}). This aligns with the theoretical claim that SpS is nearly lossless in generation quality. Additionally, the number of accepted tokens (AL$_m$) for SpS reaches 5.44 on GSM8K and 4.23 on GPQA with $m=20$, indicating that the verifier model is invoked less frequently.

Typical acceptance sampling (TAS) outperforms SpS in terms of acceptance rate, achieving more accepted tokens and lower rejection rates. For example, on GSM8K with $m=20$, TAS improves AL$_m$ from 5.44 to 7.23 (Table~\ref{tab:main:a}) and reduces the rejection rate by 35\%, which is consistent with our approximation analysis in Section~\ref{sec:approach:approx}. However, TAS often introduces distributional shifts that degrade performance. For instance, on GPQA in Table~\ref{tab:main:a}, TAS yields lower accuracy than SpS (38.89 vs. 42.93), likely due to accepting plausible yet suboptimal tokens, especially when the verifier distribution contains fine-grained decision signals.

By contrast, our proposed method, Cactus, achieves the highest acceptance rates across all benchmarks while maintaining or improving accuracy. When $\delta = 0.75$, Cactus consistently surpasses both SpS and TAS in AL$_m$ while achieving strong accuracy, e.g., a score of 86.66 on GSM8K with $m=20$ (Table~\ref{tab:main:a}) and 45.46 on GPQA with $m=20$ (Table~\ref{tab:main:b}), notably outperforming all baselines. When $\delta = 1.0$, Cactus further increases AL$_m$ to 7.61 on GSM8K with a score of 86.43 (Table~\ref{tab:main:a}), or to 7.00 with a score of 92.87 using a larger verifier (Table~\ref{tab:main:b}). Notably, unlike TAS, Cactus does not degrade performance on challenging benchmarks such as GPQA. Instead, it achieves both high acceptance rates and stable accuracy, validating its theoretical foundation in constrained optimization and demonstrating practical robustness across diverse tasks.

\subsection{In-depth analyses}\label{sec:exp:analyses}

\paragraph{Accuracy against acceptance rates.}
\begin{figure}[t]
    \centering
    \includegraphics[width=\linewidth]{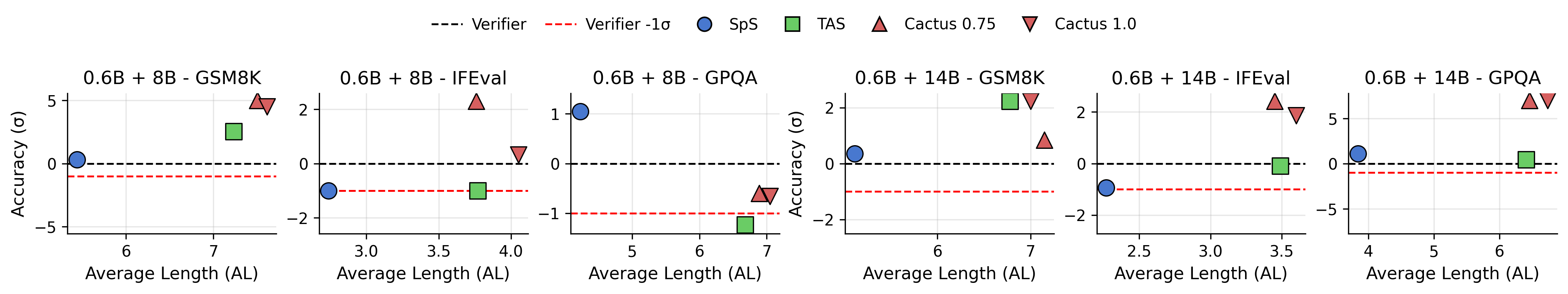}
    \caption{Accuracy-acceptance across benchmarks and model settings. The $x$-axis shows the average accepted length (AL), and the $y$-axis shows accuracy normalized by the standard deviation from the verifier.}
    \label{fig:tradeoff}
\end{figure}

We visualize the accuracy-acceptance trade-off in Figure~\ref{fig:tradeoff}, where accuracy is measured in standard deviations ($\sigma$) from the verifier mean, and throughput is quantified by the average accepted length (AL). Each subplot corresponds to a specific benchmark and verifier-drafter pair. The dashed black line indicates the verifier performance, and the red dashed line marks the $-1\sigma$ threshold, a commonly used indicator of statistically significant degradation.

As shown, TAS improves throughput over SpS but often suffers from accuracy drops, notably falling below the $-1\sigma$ threshold on GPQA with the 8B verifier ($m=20$). In contrast, Cactus consistently preserves accuracy (remaining within or above the verifier confidence range) and frequently exceeds it, such as on GSM8K and IFEval with both 8B and 14B verifiers. This demonstrates that Cactus effectively improves decoding efficiency without compromising (and sometimes even enhancing) generation quality.

It is also worth noting that the improvements from Cactus are stable across tasks with different characteristics. For instance, on challenging benchmarks like GPQA, where other methods either exhibit significant degradation (e.g., TAS) or achieve limited throughput gains (e.g., SpS), Cactus substantially increases AL while maintaining accuracy above baseline. This highlights the strength of our constrained acceptance framework in balancing aggressive token acceptance with distributional fidelity.

\paragraph{The importance of divergence control.}

Our \name dynamically manipulates the target distribution to increase the chance of accepting the sampled tokens. Since this inevitably pushes the target distribution $h$ from the verifier $q$ to be more similar to the draft model $p$, it resembles the notion of mixing the distributions $q$ and $p$ by interpolation. However, we argue that \name is superior to simple interpolation, given that it uses a principled approach which comes with a divergence guarantee. We empirically justify this argument by the following experiment.

\begin{figure}[t]
    \centering
    \includegraphics[width=1.0\linewidth]{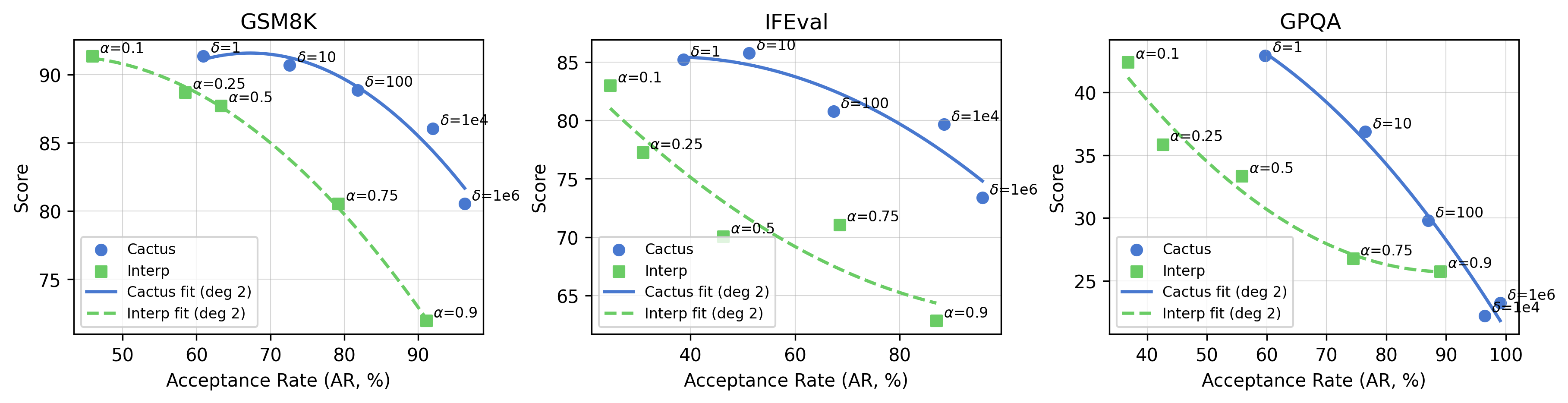}
    \caption{Task score vs.\ acceptance rate for the 0.6B+14B Qwen~3 combination without top-$p$/top-$k$ sampling arguments. Solid and dashed lines are degree-2 polynomial fits.}
    \label{fig:interp}
\end{figure}

Here, we produce data points by running grid search on $\delta$ for \name and interpolation rate $\alpha$ for interpolation, respectively. As shown, \name consistently outperforms interpolation at the similar acceptance rate. For example, at a similar acceptance rate of approximately 90\%, \name achieves a score above 86 ($\delta = 1e4$) on GSM8K, whereas interpolation only reaches a score below 72 ($\alpha=0.9$). Notably, even at a 96.3\% acceptance rate, \name maintains a higher score (above 80), further confirming the necessity of divergence control in our method.

\paragraph{Throughput comparison.}

In Section~\ref{sec:exp:main}, we used AL$_m$ and Rej as proxies for throughput, as they are invariant to hardware and system conditions. Here, we additionally report wall-time results, all measured on A100 40GB GPUs with identical CPU and memory configurations. We used VLLM~\citep{kwon2023efficient} with its default compilation settings to ensure realistic inference conditions. The results on GPQA are shown in Figure~\ref{fig:throughput}.

\begin{figure}[t]
    \centering
    \includegraphics[width=1.0\linewidth]{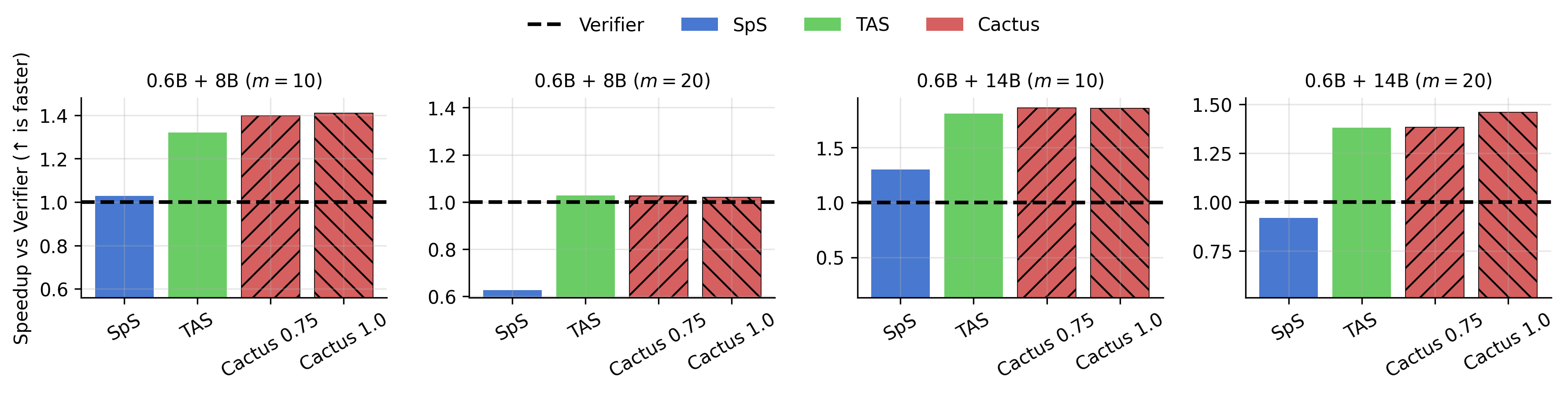}
    \caption{Wall-time normalized throughput ($y$-axis) across different model sizes and draft lengths. The wall time of a single verifier model is always normalized to 1.}
    \label{fig:throughput}
\end{figure}

Across all settings, Cactus remains competitive or superior to all baselines. In particular, Cactus 0.75 and 1.0 yield significant improvements in the 0.6B+14B setting, where Cactus 1.0 achieves nearly 1.9× speedup over the verifier alone with $m=10$, while also maintaining the highest score on GPQA (see Table~\ref{tab:main:b}). In contrast, TAS slightly underperforms Cactus in nearly all settings. Notably, as discussed in Section~\ref{sec:approach:approx} and verified in Table~\ref{tab:main:b}, TAS lacks explicit divergence control. These results highlight the benefit of Cactus's constrained acceptance strategy, which more effectively balances fidelity and efficiency than existing baselines.

\paragraph{Evaluating on more model series.}
To assess the generality of our method, we go beyond Qwen~3 and evaluate three additional model series: Gemma (2B + 9B, \cite{team2024gemma}), DeepSeek~R1 (1.5B + 7B, \cite{deepseek-ai2025deepseekr1}), and LLaMA (1B + 8B, \cite{dubey2024llama}). Each model pair represents a distinct series developed by different teams with varying training methodologies. Following~\cite{bachmann2025judge}, we additionally evaluate top-$k$ decoding as a naive lossy baseline, where draft tokens are accepted if they fall within the top-5 candidates according to the verifier. All drafter-verifier pairs follow the same speculative decoding setup, and accuracy is measured with standard task-specific metrics. We also include SpS and TAS baselines under equivalent configurations to ensure a fair comparison. The results are presented in Figure~\ref{fig:more-models}.

\begin{figure}[t]
    \centering
    \includegraphics[width=0.7\linewidth]{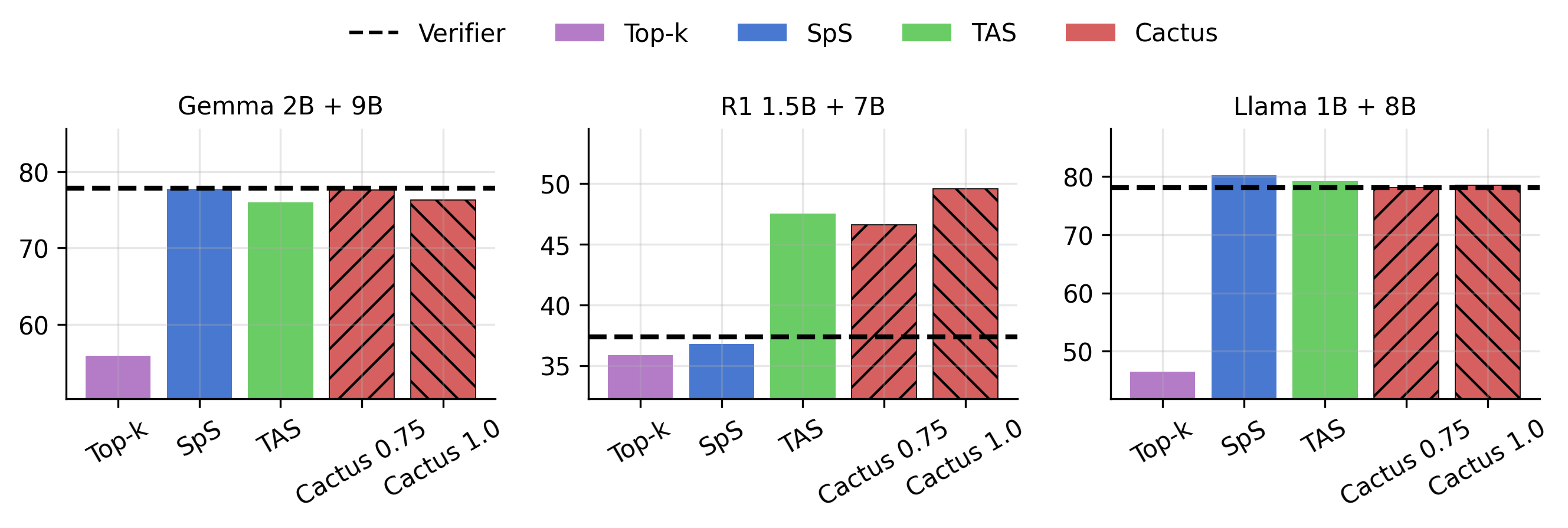}
    \caption{Evaluating on GSM8K with three model pairs.}
    \label{fig:more-models}
\end{figure}

Top-$k$ decoding consistently underperforms the verifier model, reaffirming the importance of using principled verifier-guided sampling like \name. Across all settings, \name delivers strong and consistent performance. For DeepSeek~R1 and Gemma, \name notably outperforms TAS. While SpS and TAS perform well on LLaMA, \name matches their accuracy and retains its robustness across models. These results support the conclusion that \name generalizes well across diverse architectures and remains competitive or superior regardless of the underlying model series.

\paragraph{Scaling to larger models.}

To evaluate the scalability of our method under more memory-intensive conditions, we conduct experiments on a larger model pair: Qwen~3~1.7B (drafter) and 32B (verifier). This setting involves significantly higher parameter counts than the reported 14B maximum in the main table, serving to verify performance where memory bottlenecks are typically more prominent. We maintain the standard speculative decoding setup with a draft length of $m=10$ and report both accuracy and acceptance length (AL).

\begin{table}[t]
\caption{The results of Qwen~3~32B as verifier and Qwen~3~1.7B as drafter on three benchmarks: GSM8K, IFEval, and GPQA. We report the ``strict-match'' accuracy and the acceptance length (AL).}\label{tab:large_models}
\centering
\resizebox{0.6\textwidth}{!}{%
\begin{tabular}{llcccccc}
\toprule
& & \multicolumn{2}{c}{GSM8K} & \multicolumn{2}{c}{IFEval} & \multicolumn{2}{c}{GPQA} \\
\cmidrule(r){3-4} \cmidrule(r){5-6} \cmidrule(r){7-8}
$m$ & Name & Score$^\uparrow$ & AL$_{m}^\uparrow$ & Score$^\uparrow$ & AL$_{m}^\uparrow$ & Score$^\uparrow$ & AL$_{m}^\uparrow$ \\
\midrule
\multirow{3}{*}{10} & SpS & \textbf{95.30} & 5.03 & 83.36 & 2.61 & 40.40 & 3.73 \\
& TAS & 94.10 & 7.02 & 83.73 & 4.16 & 40.40 & 6.12 \\
& Ours ($\delta=1$) & 94.40 & \textbf{7.13} & \textbf{85.21} & \textbf{4.47} & \textbf{41.92} & \textbf{6.36} \\
\bottomrule
\end{tabular}}
\end{table}

As shown in Table~\ref{tab:large_models}, \name demonstrates superior efficiency (achieving the longest acceptance lengths) across all three benchmarks. In terms of task performance, it notably surpasses TAS and SpS on IFEval and GPQA, while achieving a comparable result on GSM8K. These findings confirm that the effectiveness of \name naturally extends to larger models, delivering consistent improvements in acceptance rates while maintaining accuracy.

\section{Related work}\label{sec:related}

\paragraph{The draft-and-verify scheme.}
The most related line of work is the draft-and-verify scheme to accelerate auto-regressive decoding. The foundation of this scheme lies in the acceptance algorithms (i.e., designing the acceptance rate and recovery probability functions in Section~\ref{sec:approach}). This includes vanilla speculative sampling~\citep{chen2023accelerating,leviathan2023fast} and typical acceptance sampling~\citep{hewitt2022truncation,meister2023locally,cai2024medusa}. \name belongs to this category, as it shares the same framework but utilizes a different acceptance strategy. For this reason, we extensively compare it against both methods. In addition to acceptance algorithms, building specialized models for this scheme has been shown to be effective~\citep{kim2023speculative,liu2024kangaroo,liao2025rewardguided}. For instance, \citet{cai2024medusa} fine-tune multiple heads for generating subsequent tokens; \citet{li2024eagle0,li2024eagle020,li2025eagle} propose EAGLE, which introduces an additional head for draft token generation; \citet{bachmann2025judge} propose Judge Decoding, training a binary classifier to augment the acceptance rate function. However, these methods require substantial training resources, whereas \name is a training-free acceptance rule. We also expect that \name can be directly applied to any method that utilizes either SpS or TAS as the underlying principle. Another generalization of speculative sampling involves using multiple draft tokens or verifiers~\citep{yang2024multi,chen2024cascade,jeon2024recursive}. For example, \citet{miao2023specinfer} propose SpecInfer with tree-based draft generation; TreeBoN~\citep{qiu2024treebon} integrates speculative sampling into best-of-N tree-search decoding. We leave the exploration of more integrated versions of multi-drafter or multi-verifier \name to future work.

\paragraph{Low-complexity attention for Transformers.}

Transformer models generate sequences in an auto-regressive manner~\citep{vaswani2017attention}. Since each token attends to all previous ones, generation time grows quadratically with sequence length~\citep{wang2020linformer}. To address this, previous work has proposed low-complexity attention variants~\citep{child2019generating,zaheer2020big,tsai2019transformer,kitaev2020reformer,choromanski2021rethinking}. These methods modify the Transformer architecture itself. \name can be combined with these methods since they also follow the auto-regressive paradigm. In addition to architectural changes, decoding complexity can also be reduced by manipulating the KV cache~\citep{zhang2024h2o,li2024snapkv,cai2024pyramidkv}. For instance, SnapKV~\citep{li2024snapkv} evicts less relevant tokens from the prompt before generation; \citet{hao2025radar} and~\citet{wu2026tokmem} compress context with a more efficient representation. These techniques are orthogonal to speculative sampling methods like \name.

\paragraph{Minimizing overheads of Transformers.}

Without approximating the Transformer architecture, overheads can still be reduced to accelerate decoding. Flash Attention~\citep{dao2022flashattention,dao2023flashattention}, for example, uses tiling techniques to avoid memory-bound operations, and has seen widespread adoption~\citep{wolf2019huggingface0s,kwon2023efficient}. Memory-efficient attention~\citep{rabe2021self} reorders computation to maintain constant memory usage regardless of context length. Another line of work applies quantization to model parameters~\citep{lin2023awq,badri2023hqq,liu2024spinquant0}. The benefits are threefold: (1) reduced memory footprint due to lower-precision data types; (2) alleviated memory bottlenecks during decoding; and (3) improved hardware efficiency via optimized kernels. All these methods can be seamlessly integrated into speculative sampling approaches, including \name.

\section{Conclusion}\label{sec:conclusion}

In this paper, we propose a constrained optimization framework for analyzing and improving speculative sampling methods. Building upon this framework, we introduce \name, a novel training-free speculative sampling method that increases acceptance rates while maintaining a provably controlled divergence from the large verifier model. \name uses only basic element-wise operations, making it highly practical and lightweight for real-time inference. We empirically evaluate our method on a variety of benchmarks and confirm its effectiveness. As LLMs continue to grow in size and cost, our method provides a theoretically grounded yet practically efficient solution for scalable deployment.

\section*{Acknowledgments}
We thank the reviewers and chairs for their efforts. The research is supported in part by the Natural Sciences and Engineering Research Council of Canada (NSERC), the Amii Fellow Program, the Canada CIFAR AI Chair Program, a donation from DeepMind, and the Digital Research Alliance of Canada (alliancecan.ca).

\section*{Ethics statement}

We certify that all authors of this project adhere to the ICLR Code of Ethics (\url{https://iclr.cc/public/CodeOfEthics}). Our research does not involve human subjects, practices related to dataset releases, potentially harmful content, potential conflicts of interest and sponsorship, discrimination/bias/fairness concerns, privacy and security issues, legal compliance, or research integrity issues.

\section*{Reproducibility statement}

All of our experiments use publicly accessible datasets and models. Specifically, the datasets we used can be found at the following links via HuggingFace:
\begin{itemize}
    \item GSM8K: \href{https://huggingface.co/datasets/openai/gsm8k}{https://huggingface.co/datasets/openai/gsm8k}
    \item IFEval: \href{https://huggingface.co/datasets/google/IFEval}{https://huggingface.co/datasets/google/IFEval}
    \item GPQA: \href{https://huggingface.co/datasets/Idavidrein/gpqa}{https://huggingface.co/datasets/Idavidrein/gpqa}
\end{itemize} 

The models can be found at the following links:
\begin{itemize}
    \item Qwen~3~0.6B:~\href{https://huggingface.co/Qwen/Qwen3-0.6B}{https://huggingface.co/Qwen/Qwen3-0.6B}
    \item Qwen~3~1.7B:~\href{https://huggingface.co/Qwen/Qwen3-1.7B}{https://huggingface.co/Qwen/Qwen3-1.7B}
    \item Qwen~3~4B:~\href{https://huggingface.co/Qwen/Qwen3-4B}{https://huggingface.co/Qwen/Qwen3-4B}
    \item Qwen~3~8B:~\href{https://huggingface.co/Qwen/Qwen3-8B}{https://huggingface.co/Qwen/Qwen3-8B}
    \item Qwen~3~14B:~\href{https://huggingface.co/Qwen/Qwen3-14B}{https://huggingface.co/Qwen/Qwen3-14B}
    \item Qwen~3~32B:~\href{https://huggingface.co/Qwen/Qwen3-32B}{https://huggingface.co/Qwen/Qwen3-32B}
    \item Gemma~2B:~\href{https://huggingface.co/google/gemma-2-2b}{https://huggingface.co/google/gemma-2-2b}
    \item Gemma~9B:~\href{https://huggingface.co/google/gemma-2-9b}{https://huggingface.co/google/gemma-2-9b}
    \item DeepSeek~R1~1.5B:~\href{https://huggingface.co/deepseek-ai/DeepSeek-R1-Distill-Qwen-1.5B}{https://huggingface.co/deepseek-ai/DeepSeek-R1-Distill-Qwen-1.5B}
    \item DeepSeek~R1~7B:~\href{https://huggingface.co/deepseek-ai/DeepSeek-R1-Distill-Qwen-7B}{https://huggingface.co/deepseek-ai/DeepSeek-R1-Distill-Qwen-7B}
    \item LLaMA~1B:~\href{https://huggingface.co/meta-llama/Llama-3.2-1B}{https://huggingface.co/meta-llama/Llama-3.2-1B}
    \item LLaMA~8B:~\href{https://huggingface.co/meta-llama/Llama-3.1-8B}{https://huggingface.co/meta-llama/Llama-3.1-8B}
\end{itemize}

In addition, our code is publicly available at \href{https://github.com/MANGA-UOFA/Cactus}{https://github.com/MANGA-UOFA/Cactus}.

\bibliographystyle{plainnat}
\bibliography{main}

@inproceedings{xia2024unlocking,
    title = "Unlocking Efficiency in Large Language Model Inference: A Comprehensive Survey of Speculative Decoding",
    author = "Xia, Heming and Yang, Zhe and Dong, Qingxiu and Wang, Peiyi and Li, Yongqi  and Ge, Tao and Liu, Tianyu and Li, Wenjie and Sui, Zhifang",
    booktitle = "Findings of ACL",
    year = "2024",
    url = "https://aclanthology.org/2024.findings-acl.456",
    pages = "7655--7671",
}

@inproceedings{hu2021lora,
  title     = {{LoRA}: Low-Rank Adaptation of Large Language Models},
  author    = {Edward J. Hu and Yelong Shen and Phillip Wallis and Zeyuan Allen-Zhu and Yuanzhi Li and Shean Wang and Weizhu Chen},
  booktitle = {ICLR},
  year      = {2022},
  url       = {https://openreview.net/forum?id=nZeVKeeFYf9}
}

@inproceedings{wen2023f,
  title={f-divergence minimization for sequence-level knowledge distillation},
  author={Wen, Yuqiao and Li, Zichao and Du, Wenyu and Mou, Lili},
  booktitle={ACL},
  pages={10817--10834},
  year={2023},
  url={https://aclanthology.org/2023.acl-long.605}
}

@inproceedings{wu2025ulpt,
  title={Ultra-Low-Dimensional Prompt Tuning via Random Projection},
  author={Wu, Zijun and Hao, Yongchang and Mou, Lili},
  booktitle={EACL},
  year={2026},
  url={https://openreview.net/forum?id=hb8Xm3kRCY}
}

@inproceedings{
wu2026tokmem,
title={{TokMem}: Tokenized Procedural Memory for Large Language Models},
author={Zijun Wu and Yongchang Hao and Lili Mou},
booktitle={ICLR},
year={2026},
url={https://openreview.net/forum?id=RWjEf9PdiJ}
}

@inproceedings{jeon2024recursive,
  title     = {Recursive Speculative Decoding: Accelerating {LLM} Inference via Sampling Without Replacement},
  author    = {Wonseok Jeon and Mukul Gagrani and Raghavv Goel and Junyoung Park and Mingu Lee and Christopher Lott},
  year      = {2024},
  booktitle = {ICLR Workshop on Large Language Model (LLM) Agents},
  url       = {https://arxiv.org/abs/2402.14160},
}

@inproceedings{
li2025eagle,
title={{EAGLE}-3: Scaling up Inference Acceleration of Large Language Models via Training-Time Test},
author={Yuhui Li and Fangyun Wei and Chao Zhang and Hongyang Zhang},
booktitle={NeurIPS},
year={2025},
url={https://openreview.net/forum?id=4exx1hUffq}
}

@inproceedings{stern2018blockwise,
  title={Blockwise parallel decoding for deep autoregressive models},
  author={Stern, Mitchell and Shazeer, Noam and Uszkoreit, Jakob},
  booktitle={NeurIPS},
  year={2018},
  url={https://proceedings.neurips.cc/paper/2018/hash/c4127b9194fe8562c64dc0f5bf2c93bc-Abstract.html}
}

@article{chen2023accelerating,
  title={Accelerating large language model decoding with speculative sampling},
  author={Chen, Charlie and Borgeaud, Sebastian and Irving, Geoffrey and Lespiau, Jean-Baptiste and Sifre, Laurent and Jumper, John},
  journal={arXiv preprint arXiv:2302.01318},
  year={2023},
  url={https://arxiv.org/abs/2302.01318}
}

@article{kaplan2020scaling,
  title={Scaling laws for neural language models},
  author={Kaplan, Jared and McCandlish, Sam and Henighan, Tom and Brown, Tom B and Chess, Benjamin and Child, Rewon and Gray, Scott and Radford, Alec and Wu, Jeffrey and Amodei, Dario},
  journal={arXiv preprint arXiv:2001.08361},
  year={2020},
  url={https://arxiv.org/abs/2001.08361}
}

@inproceedings{li2024eagle0,
  title     = {{EAGLE}: Speculative Sampling Requires Rethinking Feature Uncertainty},
  author    = {Yuhui Li and Fangyun Wei and Chao Zhang and Hongyang Zhang},
  booktitle = {ICML},
  year      = {2024},
  url       = {https://proceedings.mlr.press/v235/li24bt.html}
}

@inproceedings{
bachmann2025judge,
title={Judge Decoding: Faster Speculative Sampling Requires Going Beyond Model Alignment},
author={Gregor Bachmann and Sotiris Anagnostidis and Albert Pumarola and Markos Georgopoulos and Artsiom Sanakoyeu and Yuming Du and Edgar Sch{\"o}nfeld and Ali Thabet and Jonas K Kohler},
booktitle={ICLR},
year={2025},
url={https://openreview.net/forum?id=mtSSFiqW6y}
}

@inproceedings{li2024eagle020,
  title     = {{EAGLE}-2: Faster Inference of Language Models with Dynamic Draft Trees},
  author    = {Yuhui Li and Fangyun Wei and Chao Zhang and Hongyang Zhang},
  year      = {2024},
  booktitle = {EMNLP},
  url       = {https://aclanthology.org/2024.emnlp-main.422}
}

@inproceedings{holtzman2019curious,
  title={The curious case of neural text degeneration},
  author={Holtzman, Ari and Buys, Jan and Du, Li and Forbes, Maxwell and Choi, Yejin},
  booktitle={ICLR},
  year={2020},
  url={https://openreview.net/forum?id=rygGQyrFvH}
}

@article{yuan2024llm,
  title   = {{LLM} Inference Unveiled: Survey and Roofline Model Insights},
  author  = {Zhihang Yuan and Yuzhang Shang and Yang Zhou and Zhen Dong and Zhe Zhou and Chenhao Xue and Bingzhe Wu and Zhikai Li and Qingyi Gu and Yong Jae Lee and Yan Yan and Beidi Chen and Guangyu Sun and Kurt Keutzer},
  year    = {2024},
  journal = {arXiv preprint arXiv:2402.16363},
  url     = {https://arxiv.org/abs/2402.16363}
}

@inproceedings{hewitt2022truncation,
  title     = {Truncation Sampling as Language Model Desmoothing},
  author    = {John Hewitt and Christopher D. Manning and Percy Liang},
  year      = {2022},
  booktitle = {Findings of EMNLP},
  url       = {https://aclanthology.org/2022.findings-emnlp.249}
}

@inproceedings{meister2020if,
  title={If beam search is the answer, what was the question?},
  author={Meister, Clara and Vieira, Tim and Cotterell, Ryan},
  booktitle={EMNLP},
  year={2020},
  url={https://aclanthology.org/2020.emnlp-main.170}
}

@inproceedings{xia2022speculative,
  title={Speculative decoding: Exploiting speculative execution for accelerating seq2seq generation},
  author={Xia, Heming and Ge, Tao and Wang, Peiyi and Chen, Si-Qing and Wei, Furu and Sui, Zhifang},
  booktitle={Findings of EMNLP},
  year={2023},
  url={https://aclanthology.org/2023.findings-emnlp.257}
}

@article{meister2023locally,
  title={Locally typical sampling},
  author={Meister, Clara and Pimentel, Tiago and Wiher, Gian and Cotterell, Ryan},
  journal={TACL},
  volume={11},
  pages={102--121},
  year={2023},
  publisher={MIT Press},
  url={https://doi.org/10.1162/tacl_a_00536}
}

@inproceedings{leviathan2023fast,
  title={Fast inference from {T}ransformers via speculative decoding},
  author={Leviathan, Yaniv and Kalman, Matan and Matias, Yossi},
  booktitle={ICML},
  pages={19274--19286},
  year={2023},
  url={https://proceedings.mlr.press/v202/leviathan23a.html}
}

@misc{eval-harness,
  author       = {Gao, Leo and Tow, Jonathan and Abbasi, Baber and Biderman, Stella and Black, Sid and DiPofi, Anthony and Foster, Charles and Golding, Laurence and Hsu, Jeffrey and Le Noac'h, Alain and Li, Haonan and McDonell, Kyle and Muennighoff, Niklas and Ociepa, Chris and Phang, Jason and Reynolds, Laria and Schoelkopf, Hailey and Skowron, Aviya and Sutawika, Lintang and Tang, Eric and Thite, Anish and Wang, Ben and Wang, Kevin and Zou, Andy},
  title        = {The Language Model Evaluation Harness},
  year         = 2024,
  publisher    = {Zenodo},
  version      = {v0.4.3},
  url          = {https://zenodo.org/records/12608602}
}

@article{qwen2024qwen205,
  title   = {Qwen2.5 Technical Report},
  author  = {Qwen Team, {}},
  year    = {2024},
  journal = {arXiv preprint arXiv:2412.15115},
  url = {https://arxiv.org/abs/2412.15115}
}

@article{dubey2024llama,
  title   = {The {L}lama 3 Herd of Models},
  author  = {Abhimanyu Dubey and Abhinav Jauhri and Abhinav Pandey and Abhishek Kadian and Ahmad Al-Dahle and Aiesha Letman and Akhil Mathur and Alan Schelten and Amy Yang and Angela Fan and Anirudh Goyal and Anthony Hartshorn and Aobo Yang and Archi Mitra and Archie Sravankumar and et al.},
  year    = {2024},
  journal = {arXiv preprint arXiv:2407.21783},
  url = {https://arxiv.org/abs/2407.21783},
}

@inproceedings{dao2023flashattention,
  title={Flash{A}ttention-2: Faster attention with better parallelism and work partitioning},
  author={Dao, Tri},
  booktitle={ICLR},
  year={2024},
  url={https://openreview.net/forum?id=mZn2Xyh9Ec},
}

@inproceedings{dao2022flashattention,
  title={Flash{A}ttention: Fast and memory-efficient exact attention with {IO}-awareness},
  author={Dao, Tri and Fu, Dan and Ermon, Stefano and Rudra, Atri and R{\'e}, Christopher},
  booktitle={NeurIPS},
  pages={16344--16359},
  year={2022},
  url={https://proceedings.neurips.cc/paper_files/paper/2022/hash/67d57c32e20fd0a7a302cb81d36e40d5-Abstract-Conference.html},
}

@article{rabe2021self,
  title   = {Self-attention Does Not Need $O(n^2)$ Memory},
  author  = {Markus N. Rabe and Charles Staats},
  year    = {2021},
  journal = {arXiv preprint arXiv:2112.05682},
  url = {https://arxiv.org/abs/2112.05682},
}

@article{team2024gemma,
  title={Gemma: Open models based on {G}emini research and technology},
  author={Gemma Team, {}},
  journal={arXiv preprint arXiv:2403.08295},
  year={2024},
  url = {https://arxiv.org/abs/2403.08295}
}

@inproceedings{liao2025rewardguided,
  title     = {Reward-Guided Speculative Decoding for Efficient {LLM} Reasoning},
  author    = {Baohao Liao and Yuhui Xu and Hanze Dong and Junnan Li and Christof Monz and Silvio Savarese and Doyen Sahoo and Caiming Xiong},
  year      = {2025},
  booktitle = {ICML},
  url       = {https://openreview.net/forum?id=AVeskAAETB}
}

@article{deepseek-ai2025deepseekr1,
  title   = {{DeepSeek}-{R}1: Incentivizing Reasoning Capability in {LLM}s via Reinforcement Learning},
  author       = {Daya Guo and Dejian Yang and Haowei Zhang and Junxiao Song and Peiyi Wang and Qihao Zhu and Runxin Xu and Ruoyu Zhang and Shirong Ma and Xiao Bi and Xiaokang Zhang and Xingkai Yu and Yu Wu and Z. F. Wu and Zhibin Gou and Zhihong Shao and Zhuoshu Li and Ziyi Gao and Aixin Liu and Bing Xue and Bingxuan Wang and Bochao Wu and Bei Feng and Chengda Lu and Chenggang Zhao and Chengqi Deng and Chong Ruan and Damai Dai and Deli Chen and Dongjie Ji and Erhang Li and Fangyun Lin and Fucong Dai and Fuli Luo and Guangbo Hao and Guanting Chen and Guowei Li and Hao Zhang and Hanwei Xu and Honghui Ding and Huazuo Gao and Hui Qu and Hui Li and Jianzhong Guo and Jiashi Li and Jingchang Chen and Jingyang Yuan and Jinhao Tu and Junjie Qiu and Junlong Li and J. L. Cai and Jiaqi Ni and Jian Liang and Jin Chen and Kai Dong and Kai Hu and Kaichao You and Kaige Gao and Kang Guan and Kexin Huang and Kuai Yu and Lean Wang and Lecong Zhang and Liang Zhao and Litong Wang and Liyue Zhang and Lei Xu and Leyi Xia and Mingchuan Zhang and Minghua Zhang and Minghui Tang and Mingxu Zhou and Meng Li and Miaojun Wang and Mingming Li and Ning Tian and Panpan Huang and Peng Zhang and Qiancheng Wang and Qinyu Chen and Qiushi Du and Ruiqi Ge and Ruisong Zhang and Ruizhe Pan and Runji Wang and R. J. Chen and R. L. Jin and Ruyi Chen and Shanghao Lu and Shangyan Zhou and Shanhuang Chen and Shengfeng Ye and Shiyu Wang and Shuiping Yu and Shunfeng Zhou and Shuting Pan and S. S. Li and Shuang Zhou and Shaoqing Wu and Tao Yun and Tian Pei and Tianyu Sun and Tao Wang and Wangding Zeng and Wen Liu and Wenfeng Liang and Wenjun Gao and Wenqin Yu and Wentao Zhang and W. L. Xiao and Wei An and Xiaodong Liu and Xiaohan Wang and Xiaokang Chen and Xiaotao Nie and Xin Cheng and Xin Liu and Xin Xie and Xingchao Liu and Xinyu Yang and Xinyuan Li and Xuecheng Su and Xuheng Lin and X. Q. Li and Xiangyue Jin and Xiaojin Shen and Xiaosha Chen and Xiaowen Sun and Xiaoxiang Wang and Xinnan Song and Xinyi Zhou and Xianzu Wang and Xinxia Shan and Y. K. Li and Y. Q. Wang and Y. X. Wei and Yang Zhang and Yanhong Xu and Yao Li and Yao Zhao and Yaofeng Sun and Yaohui Wang and Yi Yu and Yichao Zhang and Yifan Shi and Yiliang Xiong and Ying He and Yishi Piao and Yisong Wang and Yixuan Tan and Yiyang Ma and Yiyuan Liu and Yongqiang Guo and Yuan Ou and Yuduan Wang and Yue Gong and Yuheng Zou and Yujia He and Yunfan Xiong and Yuxiang Luo and Yuxiang You and Yuxuan Liu and Yuyang Zhou and Y. X. Zhu and Yanping Huang and Yaohui Li and Yi Zheng and Yuchen Zhu and Yunxian Ma and Ying Tang and Yukun Zha and Yuting Yan and Z. Z. Ren and Zehui Ren and Zhangli Sha and Zhe Fu and Zhean Xu and Zhenda Xie and Zhengyan Zhang and Zhewen Hao and Zhicheng Ma and Zhigang Yan and Zhiyu Wu and Zihui Gu and Zijia Zhu and Zijun Liu and Zilin Li and Ziwei Xie and Ziyang Song and Zizheng Pan and Zhen Huang and Zhipeng Xu and Zhongyu Zhang and Zhen Zhang},
  year    = {2025},
  journal = {Nature},
  volume  = {645},
  pages   = {633--638},
  url     = {https://www.nature.com/articles/s41586-025-09422-z}
}

@inproceedings{hu2025speculative,
  title={Speculative Decoding and Beyond: An In-Depth Survey of Techniques},
  author={Hu, Yunhai and Liu, Zining and Dong, Zhenyuan and Peng, Tianfan and McDanel, Bradley and Zhang, Sai Qian},
  booktitle={Findings of EMNLP},
  year={2025},
  url={https://aclanthology.org/2025.findings-emnlp.716}
}

@article{zhou2024survey,
  title={A survey on efficient inference for large language models},
  author={Zhou, Zixuan and Ning, Xuefei and Hong, Ke and Fu, Tianyu and Xu, Jiaming and Li, Shiyao and Lou, Yuming and Wang, Luning and Yuan, Zhihang and Li, Xiuhong and Yan, Shengen and Dai, Guohao and Zhang, Xiao-Ping and Dong, Yuhan and Wang, Yu},
  journal={arXiv preprint arXiv:2404.14294},
  year={2024},
  url = {https://arxiv.org/abs/2404.14294}
}

@article{child2019generating,
  title={Generating long sequences with sparse {T}ransformers},
  author={Child, Rewon and Gray, Scott and Radford, Alec and Sutskever, Ilya},
  journal={arXiv preprint arXiv:1904.10509},
  year={2019},
  url={https://arxiv.org/abs/1904.10509}
}

@inproceedings{zaheer2020big,
  title={Big {B}ird: {T}ransformers for longer sequences},
  author={Zaheer, Manzil and Guruganesh, Guru and Dubey, Kumar Avinava and Ainslie, Joshua and Alberti, Chris and Ontanon, Santiago and Pham, Philip and Ravula, Anirudh and Wang, Qifan and Yang, Li and Ahmed, Amr},
  booktitle={NeurIPS},
  pages={17283--17297},
  year={2020},
  url={https://proceedings.neurips.cc/paper/2020/hash/c8512d142a2d849725f31a9a7a361ab9-Abstract.html}
}

@inproceedings{Vaswani2017attention,
 author = {Vaswani, Ashish and Shazeer, Noam and Parmar, Niki and Uszkoreit, Jakob and Jones, Llion and Gomez, Aidan N and Kaiser, {\L}ukasz and Polosukhin, Illia},
 booktitle = {NIPS},
 title = {Attention is All you Need},
 url = {https://proceedings.neurips.cc/paper_files/paper/2017/file/3f5ee243547dee91fbd053c1c4a845aa-Paper.pdf},
 year = {2017}
}

@inproceedings{zhang2024h2o,
  title={H$_2$O: Heavy-hitter oracle for efficient generative inference of large language models},
  author={Zhang, Zhenyu and Sheng, Ying and Zhou, Tianyi and Chen, Tianlong and Zheng, Lianmin and Cai, Ruisi and Song, Zhao and Tian, Yuandong and R{\'e}, Christopher and Barrett, Clark and Wang, Zhangyang and Chen, Beidi},
  booktitle={NeurIPS},
  year={2023},
  url={https://proceedings.neurips.cc/paper_files/paper/2023/hash/6ceefa7b15572587b78ecfcebb2827f8-Abstract-Conference.html}
}

@inproceedings{li2024snapkv,
  title={Snap{KV}: {LLM} knows what you are looking for before generation},
  author={Li, Yuhong and Huang, Yingbing and Yang, Bowen and Venkitesh, Bharat and Locatelli, Acyr and Ye, Hanchen and Cai, Tianle and Lewis, Patrick and Chen, Deming},
  booktitle={NeurIPS},
  year={2024},
  url={https://proceedings.neurips.cc/paper_files/paper/2024/hash/28ab418242603e0f7323e54185d19bde-Abstract-Conference.html}
}

@inproceedings{cai2024pyramidkv,
  title     = {Pyramid{KV}: Dynamic {KV} Cache Compression based on Pyramidal Information Funneling},
  author    = {Zefan Cai and Yichi Zhang and Bofei Gao and Yuliang Liu and Tianyu Liu and Keming Lu and Wayne Xiong and Yue Dong and Baobao Chang and Junjie Hu and Wen Xiao},
  year      = {2025},
  booktitle = {COLM},
  url       = {https://openreview.net/forum?id=ayi7qezU87}
}

@inproceedings{
hao2025radar,
title={Radar: Fast Long-Context Decoding for Any {T}ransformers},
author={Yongchang Hao and Mengyao Zhai and Hossein Hajimirsadeghi and Sepidehsadat Hosseini and Frederick Tung},
booktitle={ICLR},
year={2025},
url={https://openreview.net/forum?id=ZTpWOwMrzQ}
}

@inproceedings{wolf2019huggingface0s,
  title     = {{T}ransformers: State-of-the-art Natural Language Processing},
  author    = {Thomas Wolf and Lysandre Debut and Victor Sanh and Julien Chaumond and Clement Delangue and Anthony Moi and Pierric Cistac and Tim Rault and R{\'e}mi Louf and Morgan Funtowicz and Joe Davison and Sam Shleifer and Patrick von Platen and Clara Ma and Yacine Jernite and Julien Plu and Canwen Xu and Teven Le Scao and Sylvain Gugger and Mariama Drame and Quentin Lhoest and Alexander M. Rush},
  year      = {2020},
  booktitle = {EMNLP},
  url       = {https://aclanthology.org/2020.emnlp-demos.6},
}

@article{cobbe2021training,
  title   = {Training Verifiers to Solve Math Word Problems},
  author  = {Karl Cobbe and Vineet Kosaraju and Mohammad Bavarian and Mark Chen and Heewoo Jun and Lukasz Kaiser and Matthias Plappert and Jerry Tworek and Jacob Hilton and Reiichiro Nakano and Christopher Hesse and John Schulman},
  year    = {2021},
  journal = {arXiv preprint arXiv:2110.14168},
  url = {https://arxiv.org/abs/2110.14168}
}

@article{zhou2023instruction0following,
  title   = {Instruction-Following Evaluation for Large Language Models},
  author  = {Jeffrey Zhou and Tianjian Lu and Swaroop Mishra and Siddhartha Brahma and Sujoy Basu and Yi Luan and Denny Zhou and Le Hou},
  year    = {2023},
  journal = {arXiv preprint arXiv:2311.07911},
  url = {https://arxiv.org/abs/2311.07911}
}

@inproceedings{rein2023gpqa0,
  title     = {{GPQA}: A Graduate-Level {G}oogle-Proof {Q}\&{A} Benchmark},
  author    = {David Rein and Betty Li Hou and Asa Cooper Stickland and Jackson Petty and Richard Yuanzhe Pang and Julien Dirani and Julian Michael and Samuel R. Bowman},
  year      = {2024},
  booktitle = {COLM},
  url       = {https://openreview.net/forum?id=Ti67584b98}
}

@article{yang2025qwen3,
  title   = {Qwen3 Technical Report},
  author  = {An Yang and Anfeng Li and Baosong Yang and Beichen Zhang and Binyuan Hui and Bo Zheng and Bowen Yu and Chang Gao and Chengen Huang and Chenxu Lv and Chujie Zheng and Dayiheng Liu and Fan Zhou and Fei Huang and Feng Hu and Hao Ge and Haoran Wei and Huan Lin and Jialong Tang and Jian Yang and Jianhong Tu and Jianwei Zhang and Jianxin Yang and Jiaxi Yang and Jing Zhou and Jingren Zhou and Junyang Lin and Kai Dang and Keqin Bao and Kexin Yang and Le Yu and Lianghao Deng and Mei Li and Mingfeng Xue and Mingze Li and Pei Zhang and Peng Wang and Qin Zhu and Rui Men and Ruize Gao and Shixuan Liu and Shuang Luo and Tianhao Li and Tianyi Tang and Wenbiao Yin and Xingzhang Ren and Xinyu Wang and Xinyu Zhang and Xuancheng Ren and Yang Fan and Yang Su and Yichang Zhang and Yinger Zhang and Yu Wan and Yuqiong Liu and Zekun Wang and Zeyu Cui and Zhenru Zhang and Zhipeng Zhou and Zihan Qiu},
  year    = {2025},
  journal = {arXiv preprint arXiv:2505.09388},
  url     = {https://arxiv.org/abs/2505.09388}
}

@inproceedings{yang2025speculative,
  title     = {Speculative Thinking: Enhancing Small-Model Reasoning with Large Model Guidance at Inference Time},
  author    = {Wang Yang and Xiang Yue and Vipin Chaudhary and Xiaotian Han},
  year      = {2025},
  booktitle = {COLM},
  url       = {https://openreview.net/forum?id=4Ns18bSoHo}
}

@inproceedings{
wei2022chain,
title={Chain of Thought Prompting Elicits Reasoning in Large Language Models},
author={Jason Wei and Xuezhi Wang and Dale Schuurmans and Maarten Bosma and brian ichter and Fei Xia and Ed H. Chi and Quoc V Le and Denny Zhou},
booktitle={NeurIPS},
year={2022},
url={https://openreview.net/forum?id=_VjQlMeSB_J}
}

@inproceedings{liu2024spinquant0,
  title     = {Spin{Q}uant: {LLM} quantization with learned rotations},
  author    = {Zechun Liu and Changsheng Zhao and Igor Fedorov and Bilge Soran and Dhruv Choudhary and Raghuraman Krishnamoorthi and Vikas Chandra and Yuandong Tian and Tijmen Blankevoort},
  year      = {2025},
  booktitle = {ICLR},
  url       = {https://openreview.net/forum?id=ogO6DGE6FZ}
}

@misc{badri2023hqq,
	title = {Half-Quadratic Quantization of Large Machine Learning Models},
	url = {https://dropbox.github.io/hqq_blog/},
	howpublished = {Online manuscript},
	author = {Hicham Badri and Appu Shaji},
	year = {2023}
}

@inproceedings{lin2023awq,
  title     = {{AWQ}: Activation-aware Weight Quantization for On-Device {LLM} Compression and Acceleration},
  author    = {Ji Lin and Jiaming Tang and Haotian Tang and Shang Yang and Wei-Ming Chen and Wei-Chen Wang and Guangxuan Xiao and Xingyu Dang and Chuang Gan and Song Han},
  year      = {2024},
  booktitle = {MLSys},
  url       = {https://proceedings.mlsys.org/paper_files/paper/2024/hash/42a452cbafa9dd64e9ba4aa95cc1ef21-Abstract-Conference.html}
}

@inproceedings{kwon2023efficient,
  title={Efficient Memory Management for Large Language Model Serving with {P}aged{A}ttention},
  author={Woosuk Kwon and Zhuohan Li and Siyuan Zhuang and Ying Sheng and Lianmin Zheng and Cody Hao Yu and Joseph E. Gonzalez and Hao Zhang and Ion Stoica},
  booktitle={SOSP},
  year={2023},
  url={https://dl.acm.org/doi/10.1145/3600006.3613165}
}

@inproceedings{tsai2019transformer,
    title = "Transformer Dissection: An Unified Understanding for {T}ransformers{'}s Attention via the Lens of Kernel",
    author = "Tsai, Yao-Hung Hubert  and Bai, Shaojie  and Yamada, Makoto  and Morency, Louis-Philippe  and Salakhutdinov, Ruslan",
    booktitle = "EMNLP-IJCNLP",
    year = "2019",
    url = "https://aclanthology.org/D19-1443",
    pages = "4344--4353",
}

@inproceedings{choromanski2021rethinking,
title={Rethinking Attention with {P}erformers},
author={Krzysztof Marcin Choromanski and Valerii Likhosherstov and David Dohan and Xingyou Song and Andreea Gane and Tamas Sarlos and Peter Hawkins and Jared Quincy Davis and Afroz Mohiuddin and Lukasz Kaiser and David Benjamin Belanger and Lucy J Colwell and Adrian Weller},
booktitle={ICLR},
year={2021},
url={https://openreview.net/forum?id=Ua6zuk0WRH}
}

@inproceedings{kitaev2020reformer,
title={Reformer: The Efficient {T}ransformers},
author={Nikita Kitaev and Lukasz Kaiser and Anselm Levskaya},
booktitle={ICLR},
year={2020},
url={https://openreview.net/forum?id=rkgNKkHtvB}
}

@article{wang2020linformer,
  title={Linformer: Self-attention with linear complexity},
  author={Wang, Sinong and Li, Belinda Z and Khabsa, Madian and Fang, Han and Ma, Hao},
  journal={arXiv preprint arXiv:2006.04768},
  year={2020},
  url={https://arxiv.org/abs/2006.04768},
}

@article{yang2024multi,
  title={Multi-candidate speculative decoding},
  author={Yang, Sen and Huang, Shujian and Dai, Xinyu and Chen, Jiajun},
  journal={arXiv preprint arXiv:2401.06706},
  year={2024},
  url={https://arxiv.org/abs/2401.06706}
}

@inproceedings{miao2023specinfer,
  title={{SpecInfer}: Accelerating generative large language model serving with tree-based speculative inference and verification},
  author={Miao, Xupeng and Oliaro, Gabriele and Zhang, Zhihao and Cheng, Xinhao and Wang, Zeyu and Zhang, Zhengxin and Wong, Rae Ying Yee and Zhu, Alan and Yang, Lijie and Shi, Xiaoxiang and Shi, Chunan and Chen, Zhuoming and Arfeen, Daiyaan and Abhyankar, Reyna and Jia, Zhihao},
  booktitle={ASPLOS},
  year={2024},
  url={https://dl.acm.org/doi/10.1145/3620666.3651335}
}

@inproceedings{liu2024kangaroo,
  title={Kangaroo: Lossless self-speculative decoding via double early exiting},
  author={Liu, Fangcheng and Tang, Yehui and Liu, Zhenhua and Ni, Yunsheng and Han, Kai and Wang, Yunhe},
  booktitle={NeurIPS},
  year={2024},
  url={https://proceedings.neurips.cc/paper_files/paper/2024/hash/16336d94a5ffca8de019087ab7fe403f-Abstract-Conference.html}
}

@inproceedings{cai2024medusa,
  title={Medusa: Simple {LLM} inference acceleration framework with multiple decoding heads},
  author={Cai, Tianle and Li, Yuhong and Geng, Zhengyang and Peng, Hongwu and Lee, Jason D and Chen, Deming and Dao, Tri},
  booktitle={ICML},
  year={2024},
  url={https://proceedings.mlr.press/v235/cai24b.html}
}

@inproceedings{qiu2024treebon,
  title={Tree{B}o{N}: Enhancing inference-time alignment with speculative tree-search and best-of-n sampling},
  author={Qiu, Jiahao and Lu, Yifu and Zeng, Yifan and Guo, Jiacheng and Geng, Jiayi and Wang, Huazheng and Huang, Kaixuan and Wu, Yue and Wang, Mengdi},
  booktitle={Findings of EMNLP},
  year={2025},
  url={https://aclanthology.org/2025.findings-emnlp.1140}
}

@inproceedings{brown2020language,
  title     = {Language Models are Few-Shot Learners},
  author    = {Tom B. Brown and Benjamin Mann and Nick Ryder and Melanie Subbiah and J. Kaplan and Prafulla Dhariwal and Arvind Neelakantan and Pranav Shyam and Girish Sastry and Amanda Askell and Sandhini Agarwal and Ariel Herbert-Voss and Gretchen Krueger and T. Henighan and R. Child and A. Ramesh and Daniel M. Ziegler and Jeff Wu and Clemens Winter and Christopher Hesse and Mark Chen and Eric Sigler and Mateusz Litwin and S. Gray and B. Chess and Jack Clark and Christopher Berner and Sam McCandlish and Alec Radford and I. Sutskever and Dario Amodei},
  booktitle   = {NeurIPS},
  year      = {2020},
  url = {https://proceedings.neurips.cc/paper/2020/hash/1457c0d6bfcb4967418bfb8ac142f64a-Abstract.html}
}

@misc{tran-thien2023optimal,
    title={An optimal lossy variant of speculative decoding},
    url={https://vivien000.github.io/blog/journal/a-provably-optimal-lossy-variant-of-speculative-decoding.html},
    howpublished={Unsupervised Thoughts (blog)},
    author={Tran-Thien, Vivien},
    year={2023}
}

@inproceedings{
narasimhan2025faster,
title={Faster Cascades via Speculative Decoding},
author={Harikrishna Narasimhan and Wittawat Jitkrittum and Ankit Singh Rawat and Seungyeon Kim and Neha Gupta and Aditya Krishna Menon and Sanjiv Kumar},
booktitle={ICLR},
year={2025},
url={https://openreview.net/forum?id=vo9t20wsmd}
}

@inproceedings{chen2024cascade,
  title={Cascade speculative drafting for even faster {LLM} inference},
  author={Chen, Ziyi and Yang, Xiaocong and Lin, Jiacheng and Sun, Chenkai and Chang, Kevin and Huang, Jie},
  booktitle={NeurIPS},
  pages={86226--86242},
  year={2024},
  url={https://proceedings.neurips.cc/paper_files/paper/2024/hash/9cb5b083ba4f5ca6bd05dd307a2fb354-Abstract-Conference.html}
}

@inproceedings{kim2023speculative,
  title={Speculative decoding with big little decoder},
  author={Kim, Sehoon and Mangalam, Karttikeya and Moon, Suhong and Malik, Jitendra and Mahoney, Michael W and Gholami, Amir and Keutzer, Kurt},
  booktitle={NeurIPS},
  pages={39236--39256},
  year={2023},
  url={https://proceedings.neurips.cc/paper_files/paper/2023/hash/7b97adeafa1c51cf65263459ca9d0d7c-Abstract-Conference.html}
}

\vfill
\pagebreak

\appendix

\section{Technical proofs}

\subsection{Proof of Observation~\ref{thm:generalization}}\label{apx:prf:generalization}

\generalization*

\begin{proof}
    Let $n$ be the selected token and $\vx$ be the context. Then at each step, the resulting distribution of the algorithm is:
    \begin{align}
        \Pr(n | \vx ) =& \Pr( n \sim p(\cdot|\vx) \text{ and } u \le \phi(x|\vx)) + \sum_{i=1}^{|V|} p(i | \vx) \Pr(n \sim g(\cdot|\vx) \text{ and } u > \phi(i|\vx)) \\
        =& p(n | \vx) \phi(n | \vx) + g(n | \vx) \E_{i\sim p(\cdot | \vx)}[1 - \phi(i | \vx)],
    \end{align}
    where the first term on the right hand side indicates the sampled token $n$ is accepted. The second term means the originally sampled token is rejected, and the current token $n$ comes from the recover distribution $g$. Since we would like $\Pr(n | \vx) = h(n | \vx)$, we have \begin{align}
        & p(n | \vx) \phi(n | \vx) + g(n | \vx) \E_{i\sim p(\cdot | \vx)}[1 - \phi(i | \vx)] = h(n | \vx) \label{eq:first-step}\\
    \iff & g(n | \vx) = \frac{h(n| \vx) - p(n|\vx)\phi(n|\vx)}{\E_{i\sim p(\cdot|\vx)}[1 - \phi(i | \vx)]}, \label{eq:g-expression}
    \end{align}
    hence proving the expression for $g$. Here, $\phi$ can be any function that maps to $[0,1]$ that makes $g$ a distribution. Since the expression of $g$ is self-normalizing, we only need to make sure that all $g(i | \vx)$ values are non-negative. Specifically, \begin{align}
         & 0\le g(i| \vx)  \\
         \impliedby & h(i| \vx) - p(i |\vx)\phi(i|\vx) \ge 0 \tag{$\text{Image}({\phi}) \subseteq [0, 1]$} \\
         \impliedby & \phi(i|\vx) \le \frac{h(i|\vx)}{p(i|\vx)}.
    \end{align}
    Again, with $\text{Image}({\phi}) \subseteq [0, 1]$, we have \begin{align}
        \phi(i|\vx) \le \min\left\{\frac{h(i|\vx)}{p(i|\vx)}, 1\right\},
    \end{align}
    which gives the optimal acceptance rate as $\min\left\{\tfrac{h(i|\vx)}{p(i|\vx)}, 1\right\}$.
\end{proof}

\subsection{Proof of Theorem~\ref{thm:general-solution}}\label{apx:prf:general-solution}

Before proceeding to the proof of the theorem, we first show the following technical lemma.
\begin{lemma}\label{lem:minimal-div}
Let $f: \mathbb{R}_+ \to \mathbb{R}$ be convex with $f(1) = 0$. For any $\alpha \in [0,1]$ and sub-distribution $\{q(i)\}_{i \in S}$ over $S$ with $Q := \sum_{i \in S} q(i) > 0$, the solution to:
\begin{align}
    \min_{\{h(i)\}} \quad & \sum_{i \in S} q(i) f\left(\frac{h(i)}{q(i)}\right) \label{eq:lem-obj} \\
    \text{s.t.} \quad & \sum_{i \in S} h(i) = \alpha, \quad h(i) \geq 0 \label{eq:lem-constraints}
\end{align}
is $h^*(i) = \frac{\alpha}{Q} q(i)$ for all $i \in S$.
\end{lemma}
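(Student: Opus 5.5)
The plan is to use Jensen's inequality with weights $q(i)/Q$, which are nonnegative and sum to one over $S$. For any feasible $\{h(i)\}$ satisfying \eqref{eq:lem-constraints}, write the objective \eqref{eq:lem-obj} as
\begin{align}
    \sum_{i \in S} q(i) f\left(\frac{h(i)}{q(i)}\right) = Q \sum_{i \in S} \frac{q(i)}{Q} f\left(\frac{h(i)}{q(i)}\right) \ge Q\, f\left(\sum_{i\in S} \frac{q(i)}{Q}\cdot\frac{h(i)}{q(i)}\right) = Q\, f\left(\frac{\alpha}{Q}\right).
\end{align}
The inequality is convexity of $f$ in its finite-sum form, and the last equality uses $\sum_{i\in S} h(i) = \alpha$. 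This lower bound depends only on $\alpha$ and $Q$, not on the particular feasible $h$.

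Next I would check that $h^*(i) = \frac{\alpha}{Q} q(i)$ attains the bound. It is feasible: it is nonnegative because $\alpha \ge 0$, and it sums to $\alpha$. Each ratio $h^*(i)/q(i)$ equals $\alpha/Q$, so the objective is $\sum_{i} q(i) f(\alpha/Q) = Q f(\alpha/Q)$, which matches the lower bound. Hence $h^*$ is a minimizer.

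The main obstacle is bookkeeping at the boundary. Indices with $q(i) = 0$ make the ratio $h(i)/q(i)$ undefined. I would handle this by restricting $S$ to the support of $q$, or by adopting the usual $f$-divergence conventions $0 \cdot f(0/0) = 0$ and $0 \cdot f(a/0) = a \lim_{t\to\infty} f(t)/t \ge$ the limiting slope. Under the second convention, any mass placed off the support only increases the objective relative to moving that mass onto the support, so the Jensen bound still holds. The endpoint $\alpha = 0$ requires $f(0)$ to be defined, which I would take as $f(0) := \lim_{t\to 0^+} f(t)$.

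If uniqueness is wanted, it follows when $f$ is strictly convex: equality in Jensen then forces all the ratios $h(i)/q(i)$ to be equal, so $h = h^*$. For the later use in Theorem~\ref{thm:general-solution}, existence of this minimizer is enough. The equation defining $\gamma^*$ arises precisely from the value $Q f(\alpha/Q)$, taking $Q = 1 - q(n|\vx_{<t})$ and $\alpha = 1 - \gamma$.
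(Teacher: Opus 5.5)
Your proof is correct and is essentially the paper's argument: Jensen's inequality with weights $q(i)/Q$ gives the lower bound $Q f(\alpha/Q)$, and the proportional allocation $h^*$ attains it. Your caveat that uniqueness requires strict convexity is more accurate than the paper, which asserts the Jensen equality case (and hence a unique minimizer) for merely convex $f$; for example, with $f(t)=|t-1|$ and $\alpha>Q$, every feasible $h$ with all $h(i)\ge q(i)$ attains the minimum value $\alpha-Q$.
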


\begin{proof}
Let $\lambda := \frac{\alpha}{Q}$. Define $\tilde{h}(i) := \lambda q(i)$. Then, we have
\begin{align}
    \sum_{i \in S} \tilde{h}(i) = \lambda Q = \alpha
\end{align}
satisfying the constraints. For any feasible $h \neq \tilde{h}$, define $r(i) := \frac{h(i)}{q(i)}$. By Jensen's inequality, we have
\begin{align}
    \frac{1}{Q}\sum_{i \in S} q(i) f(r(i)) \geq f\left(\frac{1}{Q}\sum_{i \in S} q(i) r(i)\right) = f\left(\frac{\alpha}{Q}\right)
\end{align}
with equality iff $r(i) = \lambda$ for all $i \in S$. Thus $\tilde{h}$ is the unique minimizer.
\end{proof}

We can now show the theorem below.
\generalsolution*
\begin{proof}
\begin{align}
    \max_{\vh} \ \ & \min\left\{\frac{h_n}{p(n | \vx_{<t})}, 1\right\} \\
    \mathrm{s.t.\ } & \vh \in \Delta^{|V|-1}, \\
    & D_f(\vh \| q(\cdot | \vx_{<t})) \le \delta.
\end{align}
Here, $\Delta^{|V|-1}$ denotes the probability simplex, and 
\begin{align}
    D_f(\vh \| q) = \sum_{i \in V} q(i) f\left(\frac{h(i)}{q(i)}\right)
\end{align}
is the $f$-divergence. The objective 
\begin{align}
    \min\left\{\frac{h_n}{p(n)}, 1\right\}
\end{align}
is maximized when $\frac{h_n}{p(n)}$ is as large as possible. However, since $\min\{ \cdot, 1 \}$ caps the value at 1, the maximum achievable is 1 (when $h_n \ge p(n)$). Thus, the problem reduces to maximizing $h_n$ under the constraints, as increasing $h_n$ directly improves the objective until $h_n \ge p(n)$. To maximize $h_n$, we allocate as much probability mass to $h_n$ as allowed by the constraints. Let $\gamma = h_n$. The remaining mass $1 - \gamma$ must be distributed over $i \neq n$. By Lemma~\ref{lem:minimal-div}, the optimal allocation for $i \neq n$ is:
\begin{align}
    h(i) = \frac{1 - \gamma}{1 - q(n)} q(i),
\end{align}
where $\frac{1 - \gamma}{1 - q(n)}$ ensures $\sum_{i \neq n} h(i) = 1 - \gamma$.

Substitute $h_n = \gamma$ and $h(i) = \frac{1 - \gamma}{1 - q(n)} q(i)$ into $D_f(\vh \| q)$:
\begin{align}
    D_f = q(n) f\left(\frac{\gamma}{q(n)}\right) + \sum_{i \neq n} q(i) f\left(\frac{1 - \gamma}{1 - q(n)}\right).
\end{align}
Simplify the second term using $\sum_{i \neq n} q(i) = 1 - q(n)$:
\begin{align}
    D_f = q(n) f\left(\frac{\gamma}{q(n)}\right) + (1 - q(n)) f\left(\frac{1 - \gamma}{1 - q(n)}\right).
\end{align}
The constraint $D_f \le \delta$ becomes an equality at optimality (since increasing $\gamma$ further would violate the constraint). Thus, $\gamma^*$ solves:
\begin{align}
    q(n) f\left(\frac{\gamma}{q(n)}\right) + (1 - q(n)) f\left(\frac{1 - \gamma}{1 - q(n)}\right) = \delta.
\end{align}

Finally, since $\gamma^*$ may exceed 1 (when $\delta$ is set too large to attain), it is truncated into $[q(n), 1]$ as a proper probability value.
\end{proof}

\subsection{Proof of Theorem~\ref{thm:hbound}}\label{apx:prf:hbound}

\hbound*

\begin{proof}
We work at a single step at $t$ and suppress the context $\vx_{<t}$. Fix $p$ and $q$ on a finite alphabet. For each drafted index $n$,
let $h_n$ be any target with $D_f(h_n\|q)\le \delta$. The conditional output is
\[
\vr_n=\phi_n(n) \ve_n+(1-\phi_n(n)) \vg(h_n),
\]
and the algorithm's one-step output is
\[
\vh_{\mathrm{alg}}=\sum_{n} p(n) \vr_n.
\]

Let
\begin{align}
\mathcal{H}_\delta
:=&\Bigl\{(h_n)_n:\ D_f(h_n\|q)\le \delta\ \ \forall n,\ \ q(i)=0\Rightarrow h_n(i)=0\Bigr\}, \\
F\bigl((h_n)_n\bigr):=&D_f\!\bigl(\vh_\text{alg}\|q\bigr).
\end{align}

Define
\begin{align}
\Gamma(\delta)\ :=\ \sup_{(h_n)\in \mathcal{H}_\delta}\ F\bigl((h_n)_n\bigr)
\ \in [0,\infty],
\end{align}
and note that $\Gamma$ depends only on $(p,q,f)$ and the budget $\delta$.
By construction,
\begin{align}
D_f\bigl(\vh_\text{alg}\|q\bigr)
\ \le\ \Gamma(\delta)\qquad\text{for every feasible family }(h_n)\in\mathcal{H}_\delta.
\end{align}
It is straightforward to show that $D_f\bigl(\vh_\text{alg}\|q\bigr) \le D_f(p \| q)$ given that the all-acceptance distribution is simply $p$.
Thus it remains to show that $\Gamma$ has the claimed shape:
non-decreasing, $\Gamma(0)=0$, and continuous in the extended-real sense.

\paragraph{Basic properties of $\Gamma$.}
\emph{(i) $\Gamma(0)=0$.} If $\delta=0$ then $h_n=q$ for all $n$, so $\vh_\text{alg}=q$ and thus $\Gamma(0)=D_f(q\|q)=0$.

\emph{(ii) Monotonicity.} If $\delta_2\ge \delta_1$ then
$\mathcal{H}_{\delta_1}\subseteq \mathcal{H}_{\delta_2}$, so
$\Gamma(\delta_1)\le \Gamma(\delta_2)$ by definition of the supremum.

\emph{(iii) Continuity.} We show right- and left-continuity. On a finite alphabet,
the set of probability distributions is compact (a simplex), and with
support alignment the feasible set $\mathcal{H}_\delta$ is closed (as the preimage 
of $[0,\delta]$ under the continuous function $\max_n D_f(\cdot\|q)$) and thus compact. 
The map $(h_n)_n\mapsto \vh_\text{alg}$ is continuous (operations involved 
are continuous on their domains), hence $F$ is continuous.

We first show the right-continuity. Let $\delta_k\downarrow\delta$. For each $k$ pick $(h_n^{(k)})_n\in\mathcal{H}_{\delta_k}$ 
with $F((h_n^{(k)})_n)\ge \Gamma(\delta_k)-\varepsilon_k$, where $\varepsilon_k\downarrow0$.
Since the alphabet is finite, the feasible families live in a finite product of simplices, 
which is compact; therefore, there exists a subsequence (not relabeled) such that 
$h_n^{(k)}\to h_n^\star$ for each $n$. By continuity of $D_f(\cdot\|q)$,
$D_f(h_n^\star\|q)=\lim_k D_f(h_n^{(k)}\|q)\le \lim_k \delta_k=\delta$,
so $(h_n^\star)_n\in \mathcal{H}_\delta$. Continuity of $F$ gives
\[
\limsup_{k\to\infty}\Gamma(\delta_k)
\ \le\ \lim_{k\to\infty}\bigl(F((h_n^{(k)})_n)+\varepsilon_k\bigr)
\ =\ F((h_n^\star)_n)\ \le\ \Gamma(\delta).
\]
Monotonicity gives $\Gamma(\delta)\le \liminf_{k\to\infty}\Gamma(\delta_k)$, hence
$\lim_{k\to\infty}\Gamma(\delta_k)=\Gamma(\delta)$.

We then show the left-continuity. Let $\delta_k\uparrow\delta$ and fix $\varepsilon>0$. Choose
$(h_n^\star)_n\in\mathcal{H}_\delta$ with $F((h_n^\star)_n)\ge \Gamma(\delta)-\varepsilon$.
For $\theta\in(0,1)$ define $h_{n,\theta}:=(1-\theta)h_n^\star+\theta q$.
By convexity of $D_f(\cdot\|q)$ in its first argument,
\[
D_f(h_{n,\theta}\|q)\le (1-\theta)D_f(h_n^\star\|q)+\theta D_f(q\|q)
\le (1-\theta)\delta < \delta,
\]
so $(h_{n,\theta})_n\in \mathcal{H}_{(1-\theta)\delta}$. By continuity of $F$, 
for sufficiently small $\theta>0$ we have
\[
F((h_{n,\theta})_n)\ge F((h_n^\star)_n)-\varepsilon \ge \Gamma(\delta)-2\varepsilon.
\]
For all large $k$ with $\delta_k > (1-\theta)\delta$, monotonicity gives
\[
\Gamma(\delta_k)\ \ge\ \Gamma((1-\theta)\delta)\ \ge\ F((h_{n,\theta})_n)
\ \ge\ \Gamma(\delta)-2\varepsilon.
\]
Thus $\liminf_{k\to\infty}\Gamma(\delta_k)\ge \Gamma(\delta)$, and since 
monotonicity gives $\limsup_{k\to\infty}\Gamma(\delta_k)\le \Gamma(\delta)$,
we have $\lim_{k\to\infty}\Gamma(\delta_k)=\Gamma(\delta)$.

In conclusion, by definition of $\Gamma$, for every feasible family $(h_n)\in\mathcal{H}_\delta$,
\[
D_f\!\bigl(\vh_\text{alg}\|q\bigr) \leq \Gamma(\delta),
\]
with $\Gamma$ non-decreasing, continuous on $[0,\infty)$,
and $\Gamma(0)=0$. This proves the theorem.
\end{proof}

\subsection{Proof of Proposition~\ref{thm:tas}}\label{apx:prf:tas}

\tas*
\begin{proof}
Given the optimization problem:
\begin{align}
    \max_{\vh} \quad & \min\left\{\frac{h_n}{p(n)}, 1\right\} \nonumber \\
    \text{s.t.} \quad & \vh \in \Delta^{|V|-1}, \label{eq:tas-simplex} \\
    & H(\vh,q ) \le H(q) + \delta, \label{eq:tas-ce}
\end{align}
where $H(q)$ is the entropy. The optimal solution concentrates mass on $\{n,m\}$. Equation \eqref{eq:tas-ce} is equivalent to
\begin{align}
    \sum_i [q(i) - h(i)]\log\frac{1}{q(i)} \ge -\delta. \label{eq:reformed-constraint}
\end{align}

To maximize $h_n = \gamma$, we must minimize the LHS of \eqref{eq:reformed-constraint}. Based on Lemma~\ref{lem:two-point}, the resulting distribution is always two-point distribution. Let $m = \argmax_i q(i)$. For fixed $h_n = \gamma$, the optimal allocation places all remaining mass on $m$:
\begin{align}
    h(i) = \begin{cases}
        \gamma, & i = n \\
        1-\gamma, & i = m \\
        0, & \text{otherwise}
    \end{cases}
\end{align}

Substituting the optimal form into \eqref{eq:tas-ce}:
\begin{align}
    \gamma\log\frac{1}{q(n)} + (1-\gamma)\log\frac{1}{q(m)} &\leq H(q) + \delta \nonumber \\
    \gamma\left(\log\frac{1}{q(n)} - \log\frac{1}{q(m)}\right) &\leq H(q) + \delta - \log\frac{1}{q(m)} \nonumber \\
    \gamma &\leq \frac{H(q) + \delta - \log\frac{1}{q(m)}}{\log\frac{q(m)}{q(n)}} \label{eq:gamma-bound}
\end{align}

Since $\gamma$ is a probability, its maximum is reached when

\begin{align}
\gamma = 1 \iff &
    \log\frac{q(m)}{q(n)} \leq H(q) + \delta - \log\frac{1}{q(m)} \nonumber \\
    \iff& q(n) \geq \exp\left(-H(q)\right)\exp(-\delta), \label{eq:main-result}
\end{align}
which is the acceptance rate used in TAS.

It should be noted that our theory here is used to reveal the soundness of the TAS acceptance function, without aiming to replicate the exact TAS algorithm. However, based on our framework, one can derive the exact TAS algorithm by adding an $H(\vh)=0$ constraint and an $\epsilon$ threshold to the cross-entropy limit, which we omitted for simplicity.
\end{proof}

In the proof above, we invoked the following technical lemma.
\begin{lemma}\label{lem:two-point}
For any $\gamma \in [0,1]$, the minimal value of $\sum_{i\neq n} [q(i) - h(i)]\log\frac{1}{q(i)}$ is achieved when:
\begin{align}
    h(m) = 1-\gamma,\quad h(i) = 0\ \forall i \neq n,m.
\end{align}
\end{lemma}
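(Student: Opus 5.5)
The statement is a linear program in disguise, so the plan is to prove it by an extreme-point (or direct averaging) argument over a scaled simplex. Fix $\gamma\in[0,1]$ and the index $n$, and write $c_i := \log\frac{1}{q(i)} \ge 0$ for $i\neq n$. The feasible set for the remaining coordinates is
\begin{align}
    \mathcal{S}_\gamma := \Bigl\{ (h(i))_{i\neq n} : h(i)\ge 0,\ \textstyle\sum_{i\neq n} h(i) = 1-\gamma \Bigr\},
\end{align}
which is the simplex $\Delta^{|V|-2}$ scaled by $1-\gamma$. The objective splits as
\begin{align}
    \sum_{i\neq n}[q(i)-h(i)]\,c_i \;=\; \underbrace{\sum_{i\neq n} q(i)c_i}_{\text{independent of } h} \;-\; \sum_{i\neq n} h(i)\,c_i .
\end{align}
The first term is a constant, so minimizing the stated quantity is the same as \emph{maximizing} the linear functional $L(h):=\sum_{i\neq n} h(i)c_i$ over $\mathcal{S}_\gamma$.

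The key step is the elementary bound
\begin{align}
    L(h) = \sum_{i\neq n} h(i)c_i \le \Bigl(\max_{i\neq n} c_i\Bigr)\sum_{i\neq n} h(i) = (1-\gamma)\max_{i\neq n} c_i .
\end{align}
This bound is attained by placing all of the mass $1-\gamma$ on one index $m \in \argmax_{i\neq n} c_i$. Equivalently, $m$ is an index with $m \in \argmin_{i\neq n} q(i)$, the largest-surprisal token. One could instead note that a linear function on a polytope attains its extremum at a vertex, and the vertices of $\mathcal{S}_\gamma$ are exactly $(1-\gamma)\ve_i$. The direct inequality above is cleaner, so I would use it. Substituting back, the minimal value is $\sum_{i\neq n} q(i)c_i - (1-\gamma)c_m$, achieved by $h(m)=1-\gamma$ and $h(i)=0$ for all $i\neq n,m$, which is the claimed two-point form. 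If several indices tie for $\max_{i \neq n} c_i$, any of them, or any split of mass among them, is optimal, so the minimizer need not be unique. The case $\gamma=1$ is trivial because $\mathcal{S}_1=\{0\}$.

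The main obstacle is the identity of $m$, not any computation. The statement does not define $m$, while the proof of Proposition~\ref{thm:tas} uses $m=\argmax_i q(i)$. The very same argument with $\max$ replaced by $\min$ shows that putting the mass on $\argmax_{i\neq n} q(i)$ (the smallest $c_i$) \emph{maximizes} the stated sum rather than minimizing it. I would therefore state explicitly that, for the lemma as worded (a minimization), $m$ must be taken as $\argmin_{i\neq n} q(i)$. I would also add a one-line remark that the two-point structure, namely all remaining mass on a single index, holds in either direction, since only the choice of vertex changes. That structural fact is what the proof of Proposition~\ref{thm:tas} actually relies on.
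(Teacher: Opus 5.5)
Your proof is correct, and its skeleton is the paper's: write the mass off $n$ as $(1-\gamma)$ times a point of the simplex, note that the objective is affine in $h$, and put all of the mass on one extremal index. You differ from the paper in \emph{which} index, and on that point you are right and the paper is not. The paper takes $m=\argmax_i q(i)$ and says the sum is minimized there ``since $\log\frac{1}{q(i)}$ is minimized at $i=m$.'' That reasoning ignores the minus sign in front of $h(i)$: making $\sum_{i\neq n}h(i)\log\frac{1}{q(i)}$ small makes the stated expression \emph{large}. A concrete check: take $q=(0.2,0.5,0.3)$, $n=1$, $\gamma=0$. Putting the mass on token $2$ gives about $0.015$, while putting it on token $3$ gives about $-0.50$. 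So with the paper's $m$, the lemma describes the maximizer, exactly as you observe.

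I would push back on your proposed repair. Redefining $m$ to minimize $q(i)$ over $i\neq n$ makes the lemma true as worded. However, that is not the fact Proposition~\ref{thm:tas} uses, and the proposition needs more than the two-point structure. There one maximizes $\gamma$ subject to $\gamma\log\frac{1}{q(n)}+\sum_{i\neq n}h(i)\log\frac{1}{q(i)}\le H(q)+\delta$. For fixed $\gamma$, this constraint is slackest when the second term is smallest, i.e.\ when the left-hand side of \eqref{eq:reformed-constraint} is \emph{largest}. The paper's ``we must minimize the LHS'' is a second sign slip that cancels the first. The optimal $h$ the proposition then writes down puts the leftover mass on the mode, which is correct only for the maximization. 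The repair that keeps Proposition~\ref{thm:tas} intact is therefore to change ``minimal'' to ``maximal'' and let $m$ maximize $q(i)$ over $i\neq n$ (the paper's unrestricted $\argmax_i q(i)$ can coincide with $n$). Your inequality run in the other direction, $\sum_{i\neq n}h(i)c_i\ge(1-\gamma)\min_{i\neq n}c_i$ with equality at $(1-\gamma)\ve_m$, proves that version verbatim.
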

We provide the proof below.
\begin{proof}
Let $h(i) = \alpha_i(1-\gamma)$ for $i \neq n$, where $\sum_i \alpha_i = 1$. Then:
\begin{align}
    \sum_{i\neq n} [q(i) - \alpha_i(1-\gamma)]\log\frac{1}{q(i)}
\end{align}
is minimized when $\alpha_i$ concentrates on $m = \argmax q(i)$, since $\log\frac{1}{q(i)}$ is minimized at $i=m$.
\end{proof}

\subsection{Proof of Theorem~\ref{thm:solution}}\label{apx:prf:solution}

\solution*

\begin{proof}
We first compute the derivatives of $\Phi$ at $\gamma_0$:
\begin{align}
    \Phi(\gamma_0) =& \Phi'(\gamma_0) = 0, \\
    \text{and } \Phi''(\gamma_0) =& \frac{1}{q(n|\vx_{<t})(1 - q(n|\vx_{<t}))}.
\end{align}
The unique root in $[q(n|\vx_{<t}), +\infty)$ is then
\[
\gamma_0 + \sqrt{\frac{2\delta}{\Phi''(\gamma_0)}} = q(n|\vx_{<t}) + \sqrt{2\delta q(n|\vx_{<t})(1 - q(n|\vx_{<t}))}.
\]
We clip this value to the interval $[q(n|\vx_{<t}), 1]$ to ensure validity as a probability.
\end{proof}

\subsection{Proof of Corollary~\ref{thm:conservative}}\label{apx:prf:conservative}

\conservative*

\begin{proof}
Let $q :=  q\bigl(n\,\bigl|\,\vx_{<t}\bigr)$ for brevity and define the
quadratic‐approximate root
\begin{align}
\hat\gamma := q+\sqrt{2\delta\,q(1-q)} .
\end{align}
Because $\Phi'(\gamma)=\log\!\frac{\gamma}{q}-\log\!\frac{1-\gamma}{1-q}$,
we have $\Phi'(\gamma)>0$ for every $\gamma\in(q,1)$; hence
$\Phi$ is strictly increasing on $[q,1]$ and the equation
$\Phi(\gamma)=\delta$ admits a unique root $\gamma^\star\in(q,1]$.

Taylor's theorem with the Lagrange remainder, expanded at $\gamma_0=q$,
gives, for some $\xi\in(q,\gamma)$,
\begin{align}
\Phi(\gamma)
   = 
  \underbrace{\frac{\Phi''(q)}{2}\,(\gamma-q)^2}_{=: T_2(\gamma)}
   + 
  \frac{\Phi'''(\xi)}{6}\,(\gamma-q)^3 .
\end{align}
For the Bernoulli KL,
\begin{align}
\Phi''(\gamma)=\frac{1}{\gamma(1-\gamma)},\qquad
\Phi'''(\gamma)= -\frac{1-2\gamma}{\gamma^{2}(1-\gamma)^{2}}.
\end{align}
Whenever $\gamma\le\frac12$, the factor $1-2\gamma$ is non-negative and
therefore $\Phi'''(\xi)\le 0$.  It follows that
\begin{align}
\Phi(\gamma) \le T_2(\gamma)
         = \frac{(\gamma-q)^2}{2q(1-q)} ,
        \qquad \forall\gamma\in(q,\tfrac12] .
\tag{$\ast$}
\end{align}

Choose $\hat\gamma$ such that $T_2(\hat\gamma)=\delta$, this yields the expression given above.
If $\,\hat\gamma\le\frac12$ or equivalently
\begin{align}
\delta \le \frac{(1/2-q)^2}{2q(1-q)} \label{eq:equivalently}
\end{align}
then the above inequality gives $\Phi(\hat\gamma)<\delta$.  Since $\Phi$ is strictly
increasing, we obtain
\begin{align}
\hat\gamma < \gamma^\star .
\end{align}

This result ensures that our approximation never overestimates $\gamma$ when the verifier model is not confident about the current sampled token.
\end{proof}

\section{Additional experiments}

\paragraph{Mentored decoding.} A blog post proposed Mentored decoding~\citep{tran-thien2023optimal}, which uses binary search to generate a target distribution $\tilde q$ such that $D_\text{KL}(q \| \tilde q) \le \delta$. Compared with \name, there are two major differences: (1) Mentored decoding allows sampled tokens to be accepted even when the verifier has zero probability, violating the principle of adhering to the verifier's mode; (2) more importantly, the solution is found via a numerical optimization procedure, significantly slowing down the decoding speed and defeating the purpose of high-throughput decoding. We conduct additional experiments to compare \name and Mentored decoding (using $\delta=1$ as recommended).

As shown in Table~\ref{tab:additional}, Mentored decoding has the least acceptance rate gain at the cost of increasing the per-step generation time. For example, on GSM8K, the overall wall time is even longer than that of the naive SpS method by 20\%. In addition, the performance degrades severely on IFEval, echoing our analysis of the misplacement of the divergence arguments.

\paragraph{Speculative cascading.} More recently, \citet{narasimhan2025faster} proposed speculative cascading (SpecCas), which dynamically decides if the sampled token will be verified by the large model based on the difference between the two distributions. Essentially, it is mathematically equivalent to mixing the draft and verifier distributions as the target distribution at different steps. We therefore conduct experiments with SpecCas (the [OPT] variant and $\alpha=0.1$ for better quality).

The results in Table~\ref{tab:additional} show that SpecCas significantly increases the acceptance rate and the decoding speed. However, its generation quality is not as good as that of other methods, even when we choose hyper-parameters to favor higher generation quality. On the other hand, we also ran experiments with $\delta=10$ for \name. With a similar wall-time acceleration on GSM8K and GPQA, \name's generation quality is considerably higher. We hypothesize that this is due to the lack of explicit divergence control in SpecCas, whereas the other methods (especially \name) guarantee controlled ``distances.'' Given that the primary focus of this paper is to introduce a new, principled method, we leave a deeper investigation of these methods to future work.

\begin{table}[t]
  \caption{The results with Qwen~3~14B as verifier and Qwen~3~0.6B as drafter.}
  \label{tab:additional}
  \centering
    \resizebox{0.9\textwidth}{!}{%
  \begin{tabular}{llccccccccc}
    \toprule
    & & \multicolumn{3}{c}{GSM8K} & \multicolumn{3}{c}{IFEval} & \multicolumn{3}{c}{GPQA} \\
    \cmidrule(r){3-5}  \cmidrule(r){6-8} \cmidrule(r){9-11}
    $m$ & Name      &  Score$^\uparrow$ & AL$_{m}^\uparrow$  & Wall$^\downarrow$  &  Score$^\uparrow$ & AL$_{m}^\uparrow$  & Wall$^\downarrow$ &  Score$^\uparrow$ & AL$_{m}^\uparrow$  & Wall$^\downarrow$ \\
    \midrule 
    \multirow{6}{*}{10}& SpS & 91.12 & 4.27 & 1.00x & 85.03 & 2.19 & 1.00x & 39.39& 3.37 & 1.00x \\
    & Mentored & 91.66 & 4.51 & 1.20x & 61.37 & 2.88 & 0.96x & 40.91 & 4.31 & 0.93x \\
    & SpecCas & 88.40 & 6.42 &  0.85x & 69.50 & 5.02 & 0.54x & 32.83 & 6.27 & 0.68x \\
    & TAS  & 92.65 & 5.24 & 0.86x &  86.14 & 3.00 & 0.82x & 38.89 & 4.99 & 0.72x \\
    & \name 1 & 93.10 & 5.44 & 0.87x & 85.96 & 3.03 & 0.78x & 43.43 & 5.16 & 0.70x \\
    & \name 10 & 92.72 & 5.73 & 0.83x & 84.66 & 3.41 & 0.74x & 39.40 & 5.71 & 0.69x \\
    \bottomrule
  \end{tabular}}
\end{table}

\paragraph{Evaluations on Spec-Bench.}
To provide a more comprehensive assessment of \name across diverse scenarios, we conduct evaluations on Spec-Bench~\citep{xia2024unlocking}, a unified benchmark designed to test speculative decoding methods across multiple distinct domains, including multi-turn conversation (MT-Bench), translation (WMT), summarization (CNN/DM), question answering (natural questions), mathematical reasoning (GSM8K), and retrieval-augmented generation (RAG). This broad coverage ensures that the observed speedups are not limited to specific task types but are consistent across varied real-world applications. We use the Qwen~3~14B model as the verifier and the 0.6B model as the drafter, maintaining a temperature of 0.6.

\begin{table}[t]
\caption{Speedup comparison on Spec-Bench using Qwen~3~14B as the verifier and Qwen~3~0.6B as the drafter. We report the speedup ratio relative to standard autoregressive decoding. ``Accepted'' denotes the mean number of accepted tokens per step.}\label{tab:spec_bench}
\centering
\resizebox{0.9\textwidth}{!}{%
\begin{tabular}{lcccccccc}
\toprule 
& MT Bench & Trans. & Summ. & QA & Math & RAG & AL$_{10}$ & Overall \\
\midrule
SpS & 2.01$\times$ & 1.40$\times$ & 1.92$\times$ & 1.85$\times$ & 1.83$\times$ & 1.86$\times$ & 3.20 & 1.81$\times$ \\
\name ($\delta=1$) & \textbf{2.09$\times$} & \textbf{1.40$\times$} & \textbf{2.04$\times$} & \textbf{1.95$\times$} & \textbf{1.86$\times$} & \textbf{1.92$\times$} & \textbf{3.29} & \textbf{1.88$\times$} \\
\bottomrule
\end{tabular}}
\end{table}

The results are summarized in Table~\ref{tab:spec_bench}. \name is tested without any hyper-parameter tuning ($\delta=1$). However, it immediately yields acceleration over the SpS baseline. In addition, \name consistently outperforms SpS across different domains, achieving an overall speedup of $1.88\times$ (+88\% gain over autoregressive decoding). This significant reduction in compute cost is achieved without additional training. It is worth noting that these speeds are measured using the HuggingFace Transformers framework~\citep{wolf2019huggingface0s}, which is less optimized for speculative sampling. We anticipate that the real-world performance gains would be even larger with a better implementation such as vLLM~\citep{kwon2023efficient}, as indicated by our other experiments.

\paragraph{Impact of draft model size.}
We employ same-family models to ensure aligned tokenization, consistent with standard practice~\citep{leviathan2023fast, chen2023accelerating}. To investigate the impact of drafter capacity, we evaluate \name on GSM8K using a Qwen~3~14B verifier with varying drafter sizes (Table~\ref{tab:draft_size}).

\begin{table}[t]
\caption{Ablation on GSM8K using Qwen~3~14B verifier with different drafter sizes ($\delta=1$).}\label{tab:draft_size}
\centering
\resizebox{0.4\textwidth}{!}{%
\begin{tabular}{lccc}
\toprule
Draft Size & Score & AL & Rej \\
\midrule
Verifier (Oracle) & 91.71 & - & - \\
0.6B & \textbf{93.10} & 5.44 & -32\% \\
1.7B & 92.50 & 6.78 & -60\% \\
4B & 92.57 & \textbf{7.76} & \textbf{-76\%} \\
\bottomrule
\end{tabular}}
\end{table}

Increasing the drafter size to 4B significantly boosts the mean accepted length (AL) to 7.76 and reduces rejection rates by 76\%, while maintaining high task accuracy. These results confirm that \name effectively scales with stronger drafters, translating improved draft quality into greater decoding efficiency.

\section{Case study}

In this section, we discuss whether the choice of $\delta$ affects qualitative measures such as reasoning ability. We gather the results of the first sample from GSM8K, where $\delta$ is set to different values when running \name with the Qwen~3~0.6B + 14B model pair.

From the case study in Table~\ref{tab:case-study}, we can see that the reasoning is poor and lengthy when $\delta$ is large (more divergence allowed). Consequently, the result is wrong due to the low-quality chain-of-thought. This confirms that the divergence control in \name helps maintain qualitative measures.

\begin{table}[htb]
\centering
\caption{Generations for different values of $\delta$.}
\label{tab:case-study}
\begin{tabular}{lp{0.95\textwidth}}
\toprule
Task & \ttfamily\detokenize{Question: Janet's ducks lay 16 eggs per day. She eats three for breakfast every morning and bakes muffins for her friends every day with four. She sells the remainder at the farmers' market daily for \$2 per fresh duck egg. How much in dollars does she make every day at the farmers' market?} \\
\midrule
$\delta=1$ & \ttfamily\detokenize{<think>\nOkay, let me try to figure out how much Janet makes each day at the farmers' market. Hmm, so the problem says she has ducks that lay 16 eggs per day. Then she eats three for breakfast and bakes muffins with four eggs each day. Then she sells the remaining eggs for $2 each. Alright, let's break this down step by step.\n\nFirst, total eggs per day: 16 eggs. Then she uses some for breakfast and baking. Let's see, for breakfast, she eats 3 eggs. Then for baking, she uses 4 eggs. So total eggs used each day would be 3 + 4 = 7 eggs. Therefore, the remaining eggs would be 16 - 7 = 9 eggs. Then she sells those 9 eggs at $2 each. So, how much does she make? 9 eggs multiplied by $2, which is 18 dollars.\n\nWait, let me check again. If she uses 3 eggs for breakfast and 4 for baking, that's 7. 16 minus 7 is indeed 9. Then 9 times 2 is 18. Yeah, that seems right. So her daily earnings would be $18. Hmm, is there any possibility I made a mistake here? Maybe I should re-read the problem to make sure I didn't miss anything.\n\nThe problem says she lays 16 eggs per day. Then she eats 3 for breakfast and bakes muffins each day with four. So the total used is 3+4=7. Subtract that from 16, which leaves 9. Multiply by $2 for selling, 9*2=18. Yep, that's correct. I think that's the answer.\n</think>\n\nJanet's ducks lay 16 eggs per day. She uses 3 eggs for breakfast and 4 eggs for baking muffins, totaling 7 eggs used. The remaining eggs are $16 - 7 = 9\$. She sells these at $2 per egg, earning $9 \\times 2 = 18\$.  \n\n#### 18} \\
\midrule
$\delta=10^6$ & \ttfamily\detokenize{<think>\nOkay, so let\'s try to figure out how much Janet makes every day at the farmers\' market. Let me start by breaking down the problem step by step.} \textbf{[lengthy CoT omitted]} \detokenize{so total eggs used is 4 * 4 = 16. Then, after eating 3 for breakfast, leftover is 16 - 3 = 13, then she sells them at 2 each: 13 * 2 = $26. \n\nYes, that seems to make sense. Therefore, the answer should be 26. But let me check if the question is asking "how much money does she make every day at the farmers\' market?" So yes, selling the remainder at $2 per egg. Therefore, answer is $26. \n\nTherefore, the answer is 26, which in the previous problems they had 5. This is a different case. So I think that\'s correct.\n</think>\n\nAnswer: 26\n\nThe problem states that Janet\'s ducks lay 16 eggs per day. She eats 3 eggs for breakfast and 4 eggs to make muffins daily. After these expenses, the remaining eggs are $ 16 - 3 - 4 = 9 $, which are sold at $2 per egg. Therefore, the total money she earns at the farmers\' market is $ 9 \\times 2 = 18 $. \n\nHowever, the phrase "bakes muffins for her friends every day with four" might imply that she uses 4 eggs per muffin, meaning she makes 4 muffins. If she uses 4 eggs each muffin, the total eggs consumed would be $ 4 \\times 4 = 16 $. Since she eats 3, the remaining eggs are $ 16 - 3 = 13 $, which are sold for $ 13 \\times 2 = 26 $. Hence, the correct answer is **26**. \n\n#### 26} \\
\bottomrule
\end{tabular}
\end{table}

\section{Broader impact and future directions}

\paragraph{Broader impact.} By improving the inference efficiency of large language models without sacrificing output quality, our method reduces computational costs and energy consumption. This contributes to more sustainable AI deployment, broadens access to high-performance language models, and supports environmentally conscious machine learning practices. Additionally, Cactus can enable faster, lower-cost applications in education, healthcare, and low-resource settings.

\paragraph{Future directions.} Our goal in this paper is to introduce and analyze the draft-and-verify framework, not to exhaustively optimize every dimension of the system. Accordingly, we identify several extensions and leave them for future exploration by the community: (1) \emph{Model scale.} We capped evaluation at 32B parameters to keep the methodology clear and costs tractable. Pushing to substantially larger backbones could reveal scaling behavior (e.g., effects on acceptance rates, latency, and robustness) and is best investigated in follow-on work, including studies of scaling laws and distributed inference. (2) \emph{Model training.} We emphasize a training-free method to highlight the mechanism itself. While targeted tuning (e.g., LoRA for the draft~\citep{hu2021lora,wu2025ulpt}, verifier calibration, joint sequence-level distillation~\citep{wen2023f}) may further improve proposal quality and reduce disagreement error, such engineering is orthogonal to our core contribution and thus deferred. (3) \emph{Memory usage.} Draft-and-verify introduces extra memory for the draft model and caches. Techniques like quantization, weight sharing, cache reuse, selective offloading, and early-exit heuristics could lower this footprint, but a thorough treatment would distract from the main result; we leave these optimizations to future work. (4) \emph{Leveraging ensemble effects.} In our main experiments, we observe that \name often performs better than the verifier model. For example, Cactus surpasses the verifier's accuracy by 2 standard deviations on both IFEval and GPQA. We hypothesize that this is because \name enables a ``healthy'' ensemble effect by combining two model distributions. Leveraging ensemble effects in speculative sampling could be explored in future work.

\section{The use of large language models}

Throughout this paper (with this paragraph being an exception), we use large language models to help identify grammar errors. Specifically, we prompt ChatGPT to ``Revise grammar errors with minimal changes of the original text'', followed by the latex source code of each paragraph. In addition, we use ChatGPT and DeepSeek~R1 to triple-check all technical proofs. The code for plotting all the figures is initially generated by ChatGPT, which is further revised by the authors according to the authors' aesthetics. We certify that the originality and scientific contributions of our method do not come from any large language models.

\end{document}